\theoremstyle{plain}
\newtheorem{theorem}{Theorem}[section]
\newtheorem{proposition}[theorem]{Proposition}
\newtheorem{lemma}[theorem]{Lemma}
\newtheorem{corollary}[theorem]{Corollary}
\newtheorem{fact}[theorem]{Fact}
\theoremstyle{definition}
\newtheorem{definition}[theorem]{Definition}
\newtheorem{assumption}[theorem]{Assumption}
\theoremstyle{remark}
\newtheorem{remark}[theorem]{Remark}
\newcommand{\correction}[1]{#1}
\newcommand{\code}[1]{\texttt{#1}}
\DeclareRobustCommand\onedot{\futurelet\@let@token\@onedot}
\def\@onedot{\ifx\@let@token.\else.\null\fi\xspace}
\newcommand{\Lc}{\mathcal{L}}
\newcommand{\Nc}{\mathcal{N}}
\newcommand{\Eb}{\mathbb{E}}
\newcommand{\Rb}{\mathbb{R}}
\newcommand{\fv}{\mathbf{f}}
\newcommand{\xv}{\mathbf{x}}
\newcommand{\yv}{\mathbf{y}}
\newcommand{\zv}{\mathbf{z}}
\newcommand{\Dv}{\mathbf{D}}
\newcommand{\Ev}{\mathbf{E}}
\newcommand{\Xv}{\mathbf{X}}
\newcommand{\epsilonv   }{\boldsymbol \epsilon   }
\newcommand{\BlackBox}{\rule{1.5ex}{1.5ex}}  %
\def\QED{~\rule[-1pt]{5pt}{5pt}\par\medskip}
\newenvironment{proof}{\par\noindent{\em Proof:\ }}{\hfill\BlackBox\\}
\newtheorem{theorem}{Theorem}
\newtheorem{lemma}{Lemma}
\newtheorem{proposition}{Proposition}
\newtheorem{remark}{Remark}
\newtheorem{definition}{Definition}
\newtheorem{assumption}{Assumption}
\icmltitlerunning{Asynchronous Diffusion Models for Temporal Point Processes}
\begin{document}

\twocolumn[
\icmltitle{ADiff4TPP: Asynchronous Diffusion Models for Temporal Point Processes}

\icmlsetsymbol{equal}{*}
\icmlsetsymbol{intern}{\textdagger}

\begin{icmlauthorlist}
\icmlauthor{Amartya Mukherjee}{comp,waterloo,intern}
\icmlauthor{Ruizhi Deng}{comp}
\icmlauthor{He Zhao}{comp}
\icmlauthor{Yuzhen Mao}{comp,sfu,intern}
\icmlauthor{Leonid Sigal}{comp,ubc}
\icmlauthor{Frederick Tung}{comp}
\end{icmlauthorlist}

\icmlaffiliation{comp}{RBC Borealis}
\icmlaffiliation{waterloo}{Department of Applied Mathematics, University of Waterloo}
\icmlaffiliation{sfu}{School of Computing Science, Simon Fraser University}
\icmlaffiliation{ubc}{Department of Computer Science, University of British Columbia}

\icmlcorrespondingauthor{Amartya Mukherjee}{a29mukhe@uwaterloo.ca}
\icmlcorrespondingauthor{Ruizhi Deng}{first2.last2@www.uk}

\icmlkeywords{Machine Learning, ICML}

\vskip 0.3in
]

\printAffiliationsAndNotice{\icmlInternship} %

\begin{abstract}
This work introduces a novel approach to modeling temporal point processes using diffusion models with an asynchronous noise schedule. 
At each step of the diffusion process, the noise schedule injects noise of varying scales into different parts of the data. 
With a careful design of the noise schedules, earlier events are generated faster than later ones, thus providing stronger conditioning for forecasting the more distant future.
We derive an objective to effectively train these models for a general family of noise schedules based on conditional flow matching. 
Our method models the joint distribution of the latent representations of events in a sequence and achieves state-of-the-art results in predicting both the next inter-event time and event type on benchmark datasets.
Additionally, it flexibly accommodates varying lengths of observation and prediction windows in different forecasting settings by adjusting the starting and ending points of the generation process.
Finally, our method shows superior performance in long-horizon prediction tasks, outperforming existing baseline methods. The implementation of our models can be found at \href{https://github.com/BorealisAI/adiff4tpp}{Github repository}.
\end{abstract}

\section{Introduction}

Event sequences are prevalent in many domains, including commerce, science, and healthcare, with event data generated by human activities and natural phenomena such as online purchases, banking transactions, earthquakes, disease outbreaks, and hospital patients’ medical observations. Temporal point processes (TPPs) have been a powerful tool to model the distribution of event occurrence over time.

Diffusion models (DMs) have emerged as a powerful framework in generative modeling, achieving remarkable success in domains such as image synthesis~\cite{rombach2022high,sauer2024fast,peebles2023scalable,zhang2023adding}, video generation~\cite{HoSGCNF2022, ho2022imagen,ma2024latte,blattmann2023stable,khachatryan2023text2video}, and modeling structured data~\cite{chen2024polydiffuse,hossieni2024puzzlefusion}. Despite these advancements, their application to 
modeling TPPs remains limited
because the current diffusion paradigm faces several challenges when applied to event sequence data as follows.

\textbf{Data heterogeneity.} TPPs often consist of mixed data types, combining continuous attributes (e.g., timestamps or durations) with discrete variables (e.g., event categories). Though DMs are a good fit for continuous data, recent work on applying them to discrete data often involve non-trivial efforts~\cite{AustinJHTB2021,inoue2023layoutdm}.

\textbf{Synchronous noise.} Existing DMs typically assume the same level of noisiness in different parts of data at each step of the diffusion or generation process. 
However, in TPPs, subsequent events could be triggered by previous ones.
Thus, reducing the uncertainty in earlier events could lead to better predictions of later ones. This can be achieved by denoising earlier events faster than the later ones, or equivalently diffusing later events faster than the earlier ones. Yet, such asynchronous diffusion strategies can not be realized given the synchronous noise assumption.

\textbf{Fixed data dimension.} The length of sequential data is inherently variable. For example, both the context length and prediction horizon in TPP sequences can vary in different problem settings (e.g., prediction horizon can be one and many for next event
and 
long horizon predictions
respectively). DMs typically assume a fixed data dimension, which is unsatisfactory for the demands of TPPs.

We present a novel design of DMs for TPPs that directly addresses the challenges in three ways: (i) Inspired by latent DMs~\cite{tabsyn, rombach2022high}, we first learn continuous latent representations that can reconstruct heterogeneous event variables faithfully using a variational autoencoder ($\beta$-VAE,~\citet{betavae}). Then, DMs are applied to model the joint distribution of events in such a latent space. 
(ii) We adopt DMs with asynchronous noise schedules 
which diffuse and generate events in a sequence with different speeds (see Figure~\ref{fig:teaser2}) and derive the training objectives for a general family of such schedules based on flow matching~\cite{LipmanCBNL2023}. 
Our method enables faster generation of earlier events in the sequence to provide stronger guidance for the generation of later ones.
It can be seen as an alternative to autoregressive generation or whole sequence diffusion with synchronous noise schedules (see Figure~\ref{fig:teaser1}).
(iii) Finally, the asynchronous noise schedules can also serve the purpose of variable length future prediction; we can flexibly control the length of observations and predictions windows by choosing the starting and ending times of the denoising process using one DM. Our approach demonstrates performance superior to existing methods on both next event predictions and long horizon predictions using TPP benchmark datasets.

\begin{figure}[t]
    \centering
    \includegraphics[width=\linewidth]{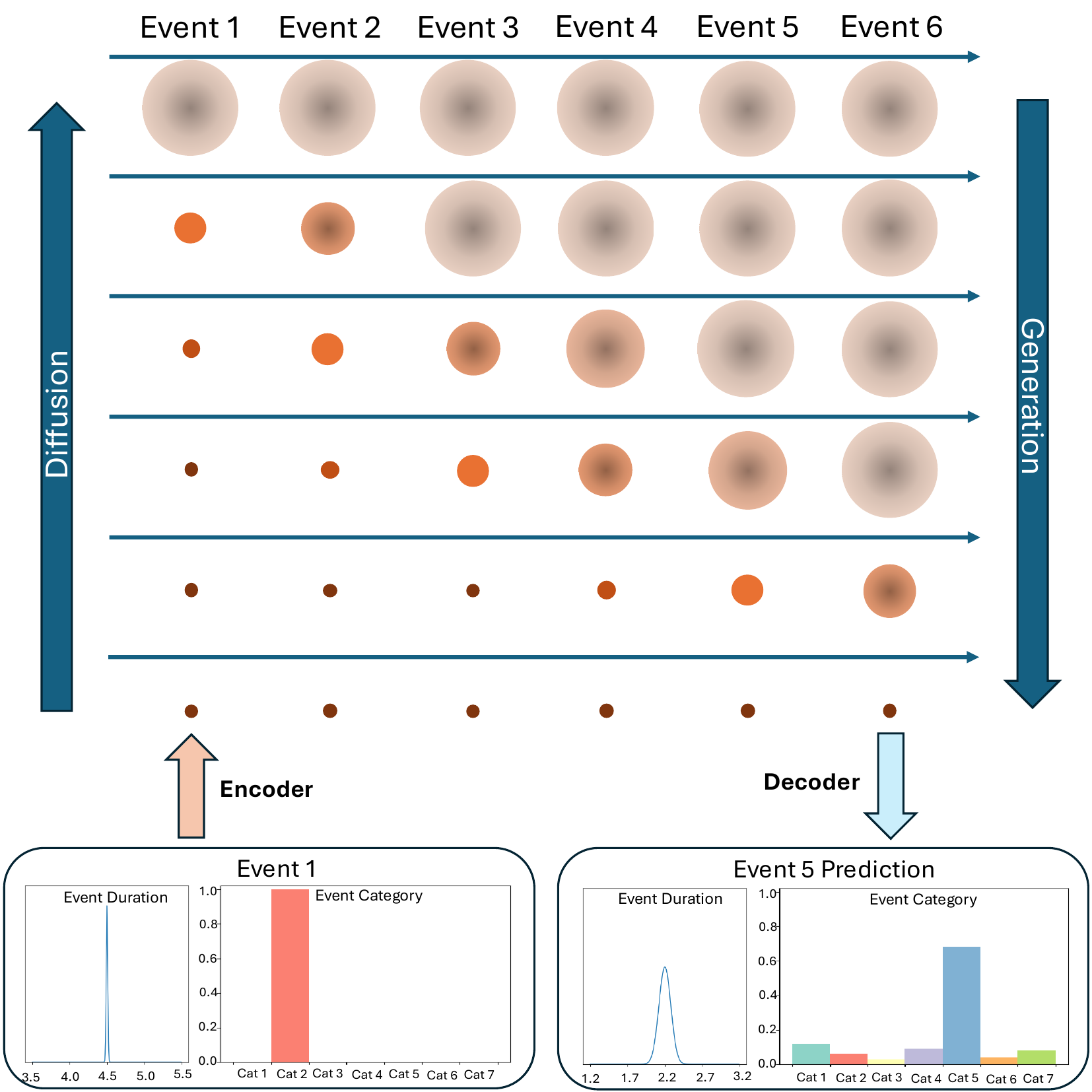}
    \caption{An overview of diffusion models with asynchronous noise schedules applied to TPP sequences in latent space.
    We use the size and blurriness of blob to indicate the scale of noise in each event where larger and more blurry blobs represent more noisy data.
    In the top part of the figure, the latent representations of later events are replaced by Gaussian noise faster than earlier events in the diffusion process (bottom to top). 
    In the generation process (top to bottom), early events will be denoised first before the generation of the future ones. An encoder encodes the duration and category of each single event into a latent representation and a decoder decodes the generated latent representation into the predicted distributions of event duration and categories as shown in the bottom part of the figure.
    }
    \label{fig:teaser2}
\end{figure}
\begin{figure}[t]
    \centering
    \includegraphics[width=\linewidth]{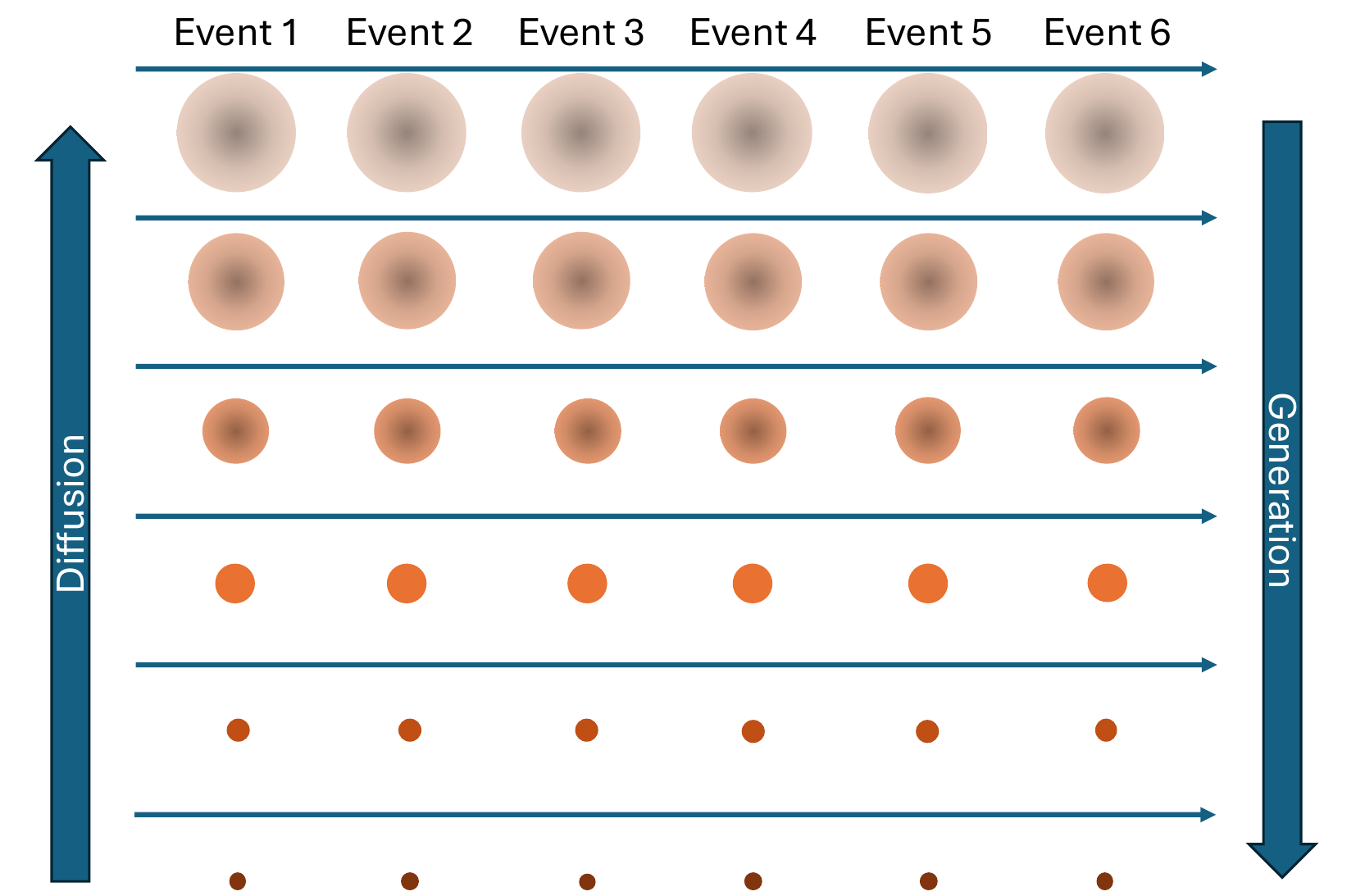}
    \caption{This figure visualizes the synchronous diffusion process when being applied to model TPPs. At each intermediate step in the diffusion or generation process, the model assumes the same scale of noise for each event.}
    \label{fig:teaser1}
\end{figure}

\section{Preliminaries}

\label{Sec:Prelim}
Our approach uses asynchronous DMs to model temporal point processes in a latent space.
We use this section to introduce the problem of temporal point process modeling. It also covers the basics of 
flow matching~\cite{LipmanCBNL2023} upon which our derivation of DM training objectives is based.

\subsection{Temporal point processes.}
Temporal point process (TPP) is a random process whose realization describes a sequence of discrete event occurrences. A typical sequence of $n$ events sampled from a TPP can be characterized as $\zv=\{\zv^{(1)},\zv^{(2)},...\zv^{(n)}\}$ with each event $\zv^{(i)} = (\tau^{(i)}, k^{(i)})$ where $i$ is an index indicating the chronological order of events, $\tau^{(i)}$ is the duration between the $i$th event and its predecessor, and $k^{(i)}$ denotes the event semantic category. 

Traditionally, TPP models estimate intensity functions for predicting the next event occurrence, and are optimized by maximizing the log-likelihood of event sequences \cite{mei2017NHP}. Recent works directly approach the problem by learning the conditional distribution of the next event's time and category~\cite{shchurIFTPP}. TPP models are commonly evaluated on two tasks:
\begin{itemize}
\vspace{-5pt}
    \item \emph{Next event prediction.} Given $\{\zv^{(1)},...,\zv^{(i-1)}\}$ from the test set, a TPP model predicts the immediate next event $\zv^{(i)}=(\tau^{(i)},k^{(i)})$ for $1<i \leqslant n$. 
    \item \emph{Long horizon prediction.} Given the preceding $m$ events $\{\zv^{(0)},...,\zv^{(m)}\}$, a model predicts the next $h$ events $\{\zv^{(m+1)},...,\zv^{(m+h)}\}$. 
\end{itemize}

\subsection{Flow Matching}
Flow matching~\cite{LipmanCBNL2023, liu2023flow} are variants of DMs with close ties to neural ordinary differential equations (ODEs)~\cite{chen2018neural}.
Compared to discrete time or stochastic differential equation (SDE)-based formulation of DMs~\cite{HoJA2020,SongDKKEP2021}, they enjoy the benefits of simple forward process, simulation-free training, and efficient sampling using ODE solvers. 
This family of methods form the backbone of notable DMs like Stable Diffusion 3 \cite{esser2024scaling}. 
It can transform
simple distributions (e.g., Gaussian noise) to complex ones (e,g., real world data) through solving the following ODE
\begin{equation}
    \dot\xv_s=v_\theta(\xv_s, s),
\end{equation}
from $s=1$ to $s=0$
where $v_\theta(\xv_s, s)$ is 
a vector field that generates the probability paths interpolating between
data $\xv_0$ and Gaussian noise $\xv_1$. 

Remarkably, Flow Matching (FM) shows that such a vector field can be constructed by marginalizing over conditional vector fields that are tractable and generate the conditional probability paths interpolating between data and noise with one end fixed~\cite{LipmanCBNL2023,esser2024scaling}. It introduces a straightforward objective to learn the parameters of $v_\theta(\xv_s, s)$ by regressing the conditional vector fields, termed Conditional Flow Matching (CFM).
Given an interpolation between data and noise:
\vspace{-4pt}
\begin{equation} \label{eq:fm}
    \xv_s(\xv_0, \epsilonv)=\alpha(s) \; \xv_0+\gamma(s) \; \epsilonv,
\vspace{-4pt}
\end{equation}
where 
$\alpha(s):[0,1]\rightarrow[0,1]$ and $\gamma(s):[0,1]\rightarrow[0,1]$ are monotonic and differentiable functions satisfying $\alpha(0)=1$, $\alpha(1)=0$, $\gamma(1)=1$, and $\gamma(0)=0$. 
CFM defines a conditional flow,  $\psi_s(\cdot|\epsilonv):=\xv_s(\xv_0,\epsilonv)$, that models the evolution of the underlying distribution $\xv_s$, and derives the conditional vector field, $u_s(\xv_s|\epsilonv)=\psi'(\psi^{-1}(\xv_s|\epsilonv)|\epsilonv)$,
which generates probability paths from data $\xv_0$ to a given sample of $\epsilonv$.
Then, $v_\theta(\xv_s, s)$ can be trained 
with 
the following objective 
\vspace{-5pt}
\begin{equation}\label{eq:loss}
    \mathbb{E}_{s\sim \mathcal{U}(0, 1], \epsilonv, \xv_0}\left[\left|\left|
    v_\theta(\xv_s(\xv_0, \epsilonv), s) - u_s(\xv_s(\xv_0, \epsilonv)|\epsilonv)
    \right|\right|^2\right].
\end{equation}
Notably, in the case of $\alpha(s)=1-s,\gamma(s)=s$, FM (Eq.~\ref{eq:fm}) is equivalent to rectified flow~\cite{liu2023flow}. 

\textbf{Discussion.} In the original FM work~\cite{LipmanCBNL2023}, $\alpha(s)$ and $\gamma(s)$ are scalar-valued functions. In Section~\ref{sec:AsyncDiff}, we derive noise schedules with theoretical guarantees that can diffuse data in asynchronous paces by extending the scalar coefficients $\alpha(s)$ and $\gamma(s)$ to matrices.

\section{Asynchronous Diffusion for Event Sequence}
\label{Sec:ADiffTPP}

In this section, we provide details on our methods to perform generation and prediction with asynchronous noise schedules in the presence of mixed data types. We approach this challenge by training a VAE that maps events to a latent space, where we perform diffusion. We introduce diffusion models with a piecewise linear asynchronous noise schedule and provide the training objectives for the DMs. A modified diffusion transformer (DiT) takes an asynchronous noise schedule $A(s)$ as an input instead of the timestep $s$ that is usually used. The entire pipeline of our approach is presented in Figure~\ref{fig:teaser2}. Finally, we show how we adapt the generative ODEs in asynchronous DMs for different future event prediction tasks.

Our pipeline and code for ADiff4TPP is adapted from the implementation of rectified flow by \cite{lee2024improving}.
This bridges the work of \cite{liu2023flow} and EDM \cite{karras2022elucidating} to improve the quality of generated images with fewer function evaluations.

\subsection{Event Latent Space Representations}
\label{sec:adaption_to_latent}
To ease the diffusion model learning, we map the heterogeneous event data into continuous latent representations. 
Specifically, we train a $\beta$-VAE with an encoder $\Ev_\phi (\cdot)$ mapping each $\zv^{(i)} = (\tau^{(i)}, k^{(i)})$ to a latent variable $\xv^{(i)} = \Ev_\phi(\zv^{(i)})$ following a Gaussian distribution with mean value denoted by $\xv^{(i)}$. A decoder $\Dv_\phi (\cdot)$ is trained to decode $\xv^{(i)}$ into a reconstructed event $\tilde{\zv}^{(i)}:=(\tilde{\tau}^{(i)},\tilde{k}^{(i)})= \Dv_\phi (\xv^{(i)})$. The whole $\beta$-VAE is trained with an objective as
\begin{equation} 
\begin{split}
    \Lc_{AE} (\zv^{(i)}) = & \Lc_{recon}({\zv}^{(i)},\tilde{\zv}^{(i)})+\beta \Lc_{KL}, \\
    = & (\tau^{(i)}-\tilde\tau^{(i)})^2\\
    &+\text{CrossEntropy}(k^{(i)},\tilde{k}^{(i)})+\beta \Lc_{KL},
\end{split}
\end{equation}
where $\Lc_{recon}(\cdot,\cdot)$ consists of a mean square error loss for reconstructing the inter-event time $\tau^{(i)}$, and a cross-entropy loss for reconstructing the event type $k^{(i)}$.
$\beta$ is a hyper-parameter tuning the strength of regularization by KL divergence between $\Xv^{(i)}$ and a standard Gaussian distribution. In practice, $\beta$ could vary in a range $[\beta_{min}, \beta_{max}]$ during training to achieve a better balance between a compact latent space and encoding faithfulness~\cite{tabsyn}. 

The encoder and decoder weights are frozen after training and each event $\zv^{(i)}$ in the training data is encoded into its latent representation $\xv^{(i)}$.
Then, DMs are trained to model the joint distribution of $\xv = \{\xv^{(1)}, \xv^{(2)}\dots, \xv^{(n)}\}\in\Rb^{n\times d}$, where $n$ is the number of events in the sequence and $d$ is the latent vector size. 
When $n < N$, we pad zeros to $\mathbf{x}$, where $N$ is a hyperparameter of the maximum length, as $\Rb^{N\times d}$.

\subsection{Asynchronous Diffusion Models}\label{sec:async_noise_schedule}

Given a sequence of encoded latent representations, we define the matrix-valued interpolation between data and noise as 
\vspace{-4pt}
\begin{equation}\label{eq:sampling_s_1}
\vspace{-4pt}
    \xv_s(\xv_0,\epsilonv)=A(s) \; \xv_0+(I-A(s)) \; \epsilonv,
\end{equation}
where $\xv_0:=\xv\in \Rb^{N\times d}$ is an event sequence in latent space and $\epsilonv$ is a Gaussian noise of the same dimension. $N$ is assumed to be the maximum possible length of event sequences and $A(s)\in\Rb^{N\times N}$ is a matrix-valued function controlling the diffusion speeds for different parts of the data, i.e., the noise schedule.

We make the following choice of asynchronous matrix $A(s)$: 
\begin{equation}\label{eq:a_s}
\vspace{-4pt}
    [A(s)]_{ii}=\text{clip}\left(\frac{s_{end}^{(i)}-s}{s_{end}^{(i)}-s_{start}^{(i)}},\min=0,\max=1\right),
\end{equation}
where 
\begin{equation}\label{eq:s_start_end}
    s_{start}^{(i)}=\frac{N-i}{2N-1},~s_{end}^{(i)}=\frac{2N-i}{2N-1},
\end{equation}
$[A(s)]_{ii}$ is the $i_{th}$ diagonal term and $A(s) = 0$ elsewhere.
Such choice of noise schedule indicates that when $s$ is between $s_{start}^{(i)}$ and $s_{end}^{(i)}$, the latent representation of the $i_{th}$ event is being diffused from data to Gaussian noise.
\begin{figure}
    \centering
    \includegraphics[width=\linewidth]{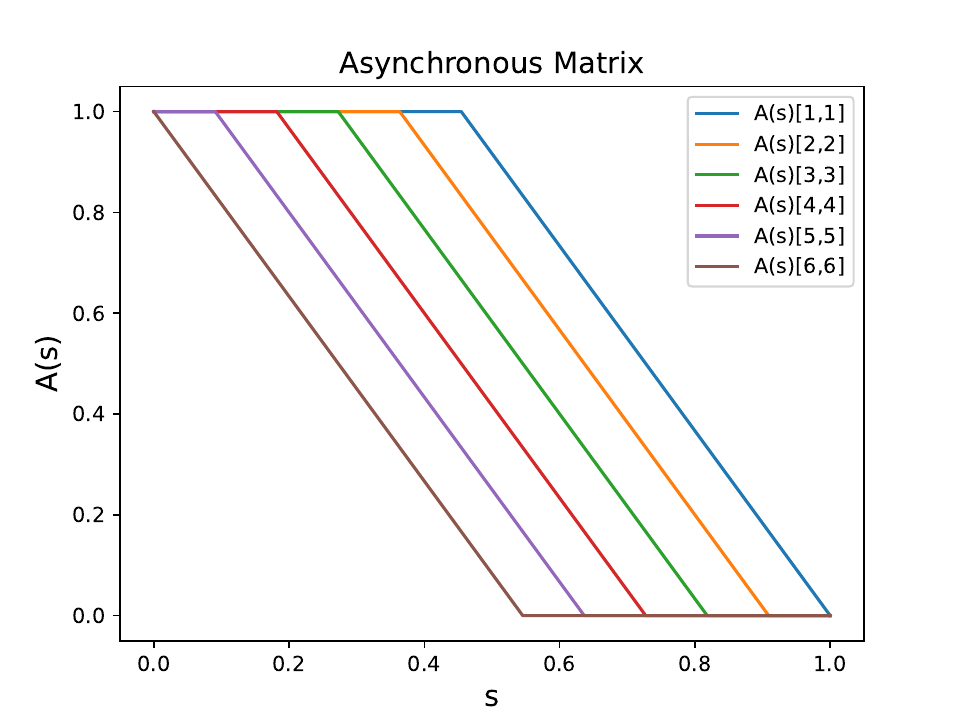}
    \caption{Asynchronous noise schedule for an event sequence of length 6. The noise schedule shows that Event 6 (the latest event in the sequence) is the first event to be completely diffused (at flow time $s=\frac{6}{11}$). Event 1 (the earliest event in the sequence) is the last event to start diffusing (at $s=\frac{5}{11}$) and be completely diffused (at $s=1$). Thus, in reverse-flow time (generation), we see that Event 1 is the first event to be completely restored, and Event 6 is the last to be completely restored.}
    \label{fig:async_matrix}
\end{figure}
An example of the asynchronous noise schedule when diffusing a sequence with $N=6$ can be seen in~Figure~\ref{fig:async_matrix}. 

Given a matrix-valued noise schedule $A(s)$, we define our generative process using the following ODE
\vspace{-4pt}
\begin{equation}
\vspace{-4pt}
    \dot\xv_s=A'(s)v_\theta(\xv_s,A(s)).
    \label{eq:gen_async_ode}
\end{equation}
During training, we sample $\xv_s$ at an arbitrary time $s\in[0,1]$ based on Equation \ref{eq:sampling_s_1}
and minimize the following objective function 
\vspace{-5pt}
\begin{equation}
\begin{split}
\label{eq:CFM_1}
    &\Lc_{CFM}(\theta)=\\
    &\Eb_{s,\xv_0,\epsilonv\sim N(0,I)}\|A'(s)[(\xv_0-\epsilonv)-v_\theta(\xv_s(\xv_0,\epsilonv),A(s))]\|^2.
\end{split}
\end{equation}
We discuss the sufficient conditions for a general family of $A(s)$ which not only ensures we can perform asynchronous generation effectively but also complies with the desirable properties of existing synchronous diffusion models in Section~\ref{sec:AsyncDiff} and derive the above training objective. In addition, different types of noise schedules are studied and compared empirically in the ablation study (Section~\ref{sec:ablation}) to justify our design of the asynchronous noise schedule.

\subsection{Diffusion Transformer (DiT)}
Following~\citet{peebles2023scalable}, we implement $v_\theta(\cdot, A(s))$ using a DiT architecture
with minor modification: Swapping the patchify and patch embedding components with the latent event representations from $\Ev_\phi (\cdot)$. 
Upon initializing the model, we provide a hyperparameter $N$ that decides the maximum length of the sequence of events that the model can generate.
The modified embedding function processes $A(s)\in\Rb^{N\times N}$ by broadcasting its elements with sinusoidal frequencies to generate embeddings that encode the matrix's temporal and structural dynamics.

We define a maximum period $T_m=10,000$ and a horizon $h=128$.
We define the argument $a\in\Rb^{N\times h}$ elementwise as $[a]_{ij}=[A(s)]_{ii}T_m^{-\frac{j-1}{h}}$ and construct the timestep embedding $e\in\Rb^{N\times (2h)}$ by $e=[\cos a,\sin a]$.

During training or generation of events with sequence length $n<N$, we apply a mask to the multihead self-attention.
The choice of masked attention, together with the asynchronous noise schedule, reflects the sequential order of data and also allows the model to flexibly handle variable prediction window length.

\subsection{Future Event Prediction as Conditional Generation}
\label{sec:conditonal_generation}

Event forecasting can be characterized by an observation window $O=\{1,...,n\}$ that precedes a prediction window $P=\{n+1,...,n+h\}$, where $h$, satisfying $n+h\leq N$, represents the number of events we want to predict. The goal of the model is to predict events in the prediction window conditioned on events in the observation window.
Let $\epsilonv=\{\epsilonv^{(i)}\}_{i=1}^{N}$ be an initial Gaussian noise, and let $\yv=\{\yv^{(i)}\}_{i\in O}$ be a latent space representation of the event sequence in an observation window. 

ADiff4TPP introduces a simple and intuitive method for event forecasting by leveraging the asynchronous flow matching objective, which frames event forecasting as a generative process for future events.
This objective enables the derivation of a simple ODE to reconstruct preceding latent events. The vector field $\fv(\xv_s,s)$ is defined element-wise as:
\begin{equation}\label{eq:prediction}
    f_i(\xv_s,s)=\begin{cases}
        [A'(s)]_{ii}(\yv^{(i)}-\epsilonv^{(i)}) & i\in O\\
        [A'(s)]_{ii}[v_\theta(\xv_s,A(s))]_i & i\in P.
    \end{cases}
\end{equation}
Since $[A'(s)]_{ii}(\yv^{(i)}-\epsilonv^{(i)})$ remains constant along $[s_{start}^{(i)},s_{end}^{(i)}]$, ODE solvers such as Euler's method or Runge-Kutta (RK4) can solve the equation within this domain with zero numerical error.

This ODE is solved from $s=s_{end}$ to $s=s_{start}$, where $s_{end}=\max\{s_{end}^{(i)}:i\in P\}, s_{start}=\min\{s_{start}^{(i)}:i\in P\}$ with initial condition $\xv_{s_{end}}=A(s_{end})\xv^*+(1-A(s_{end}))\epsilonv$, where $\xv^*=[\yv^{(1)},...,\yv^{(n)},\epsilonv^{(n+1)},...,\epsilonv^{(N)}]$. This ensures that all the events in $P$ are initialized to Gaussian noise.

The asynchronous nature of ADiff4TPP allows the ODE to be solved over the shorter timespan $[s_{start},s_{end}]$ instead of the full interval $[0,1]$, improving computational efficiency. The indices of the event sequence that proceed the prediction window are masked out.

Furthermore, both short and long horizon forecasting is performed by solving the same ODE. This eliminates the need for repetitive next event predictions, which are typically required in existing TPP methods. By avoiding the sequential appending of predicted events to the tensor of preceding events, ADiff4TPP significantly reduces evaluation time, making it a more efficient and scalable solution for TPP forecasting tasks.

\section{Noise Schedules and Training Objectives}
\label{sec:AsyncDiff}

Our proposed approach leverages asynchronous noise schedules to enable more flexible and efficient conditional generation, wherein the model generates future events based on partial observations of preceding data.
While generative models with asynchronous noise schedules have been explored in prior works such as \citet{lee2023sequential} and \citet{chen2024diffusionforcingnexttokenprediction}, our formulation introduces greater flexibility by allowing for customizable noise schedules. To support this, we establish a set of sufficient conditions that the noise schedule must satisfy to ensure the model's ability to perform asynchronous generation effectively.
Our contributions in this section are summarized as follows:
\begin{enumerate}
\vspace{-5pt}
    \item We extend FM to asynchronous noise schedules and derive conditions on the noise schedule for the flow to remain valid.
    \item We relax the conditions on invertibility and differentiability of the flow and show that they still preserve the essential properties of FM.
    \item We derive objective functions for the more flexible FM method and establish their mathematical equivalence.
\end{enumerate}
\subsection{Conditions on the Noise Schedule}
Extending the approach of \citet{lee2023sequential}, 
we introduce a positive semi-definite matrix-valued time-varying coefficient $A(s)\in (H^1[0,1])^{n\times n}$ (i.e. every scalar element of $A(s)$ is in the Sobolev space $H^1[0,1]$), satisfying $A(0)=I$ and $A(1)=0$, to parameterize the interpolation in the FM formulation as shown in Equation \ref{eq:sampling_s_1}.

The extension of a single scalar $\alpha(s)$ to $A(s)$ allows us to apply more fine-grained control on the speed of interpolation between data and noise to different parts of data. To derive an objective function, we extend the continuous FM %
framework by first defining a flow $\psi_s(\cdot|\epsilonv):=\xv_s(\xv_0,\epsilonv)$. Our notation is based on the work of \cite{esser2024scaling}, where the flow is conditioned on the noise $\epsilonv$.
We first consider some sufficient conditions for $\alpha(s)$ to satisfy for FM, like boundary conditions, non-negativity, monotonicity, and continuity. Similarly, we introduce the following conditions for $A(s)$.
\begin{assumption}\label{as:a_s}
    The matrix-valued noise schedule $A(s)\in (H^1[0,1])^{n\times n}$ satisfies the following conditions:
    \vspace{-5pt}
    \begin{enumerate}
    \vspace{-5pt}
        \item $A(s)$ satisfies the boundary conditions: $A(0)=I,A(1)=0$.
        \item $A(s)$ is positive semi-definite for all $s\in[0,1]$
        \item $A(s)$ is monotone non-increasing. We define monotone non-increasing in matrices as satisfiyng $\|A(s)\xv\|\leq\|A(s')\xv\|$ for all $\xv\in\Rb^n$ if $s\geq s'$.
        \item $A(s)$ is continuous for all $s\in[0,1]$. 
    \vspace{-5pt}
    \end{enumerate}
    \vspace{-4pt}
\end{assumption}
This set of conditions ensure our asynchronous DM aligns with the desirable properties of existing DMs. Please refer to Appendix \ref{app:physical} for a more detailed discussion on physical interpretability.
Subsequently, we introduce a lemma that verifies the validity of the flow provided the conditions.
\begin{lemma}[Informal]\label{lemma:a_s}
    The flow $\psi(\xv_s|\epsilonv)$ in Equation~\ref{eq:sampling_s_1} governed by a matrix-valued coefficient $A(s)\in (H^1[0,1])^{n\times n}$ remains valid as long as $A(s)$ satisfies condition (4) from Assumption \ref{as:a_s}.
    \vspace{-4pt}
\end{lemma}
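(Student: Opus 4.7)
The plan is to interpret \emph{valid flow} as requiring two properties: (i) the map $\psi_s(\xv_0 \mid \epsilonv) = A(s)\xv_0 + (I-A(s))\epsilonv$ depends continuously on $s$, so that its pushforward of the joint law of $(\xv_0, \epsilonv)$ yields a continuous family of marginal distributions $p_s$ on $\Rb^n$; and (ii) the associated conditional vector field exists in a weak sense, so that the CFM objective in Equation~\ref{eq:CFM_1} is well-defined. My claim is that both properties follow from the $H^1$ regularity of $A(s)$ together with condition (4) alone, without needing positive semi-definiteness, monotonicity, or even pointwise invertibility of $A(s)$.

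First I would invoke the one-dimensional Sobolev embedding $H^1[0,1] \hookrightarrow C[0,1]$ to note that every entry of $A(s)$ already admits a continuous representative; condition (4) simply fixes this representative so that $s \mapsto A(s)$ is a genuine continuous path in $\Rb^{n \times n}$. From here, continuity of $\psi_s(\xv_0 \mid \epsilonv)$ in $s$ is immediate, and an application of the bounded convergence theorem against bounded test functions shows that the marginal law $p_s$ obtained as the pushforward $(\psi_s)_{\#}(p_0 \otimes \Nc(0, I))$ is weakly continuous in $s$, with endpoints dictated by $A(0)=I$ and $A(1)=0$. Crucially, this step only uses continuity of $A(s)$, never its invertibility.

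The main obstacle, and the reason the lemma is substantive, is producing a conditional vector field without inverting $A(s)$. The textbook recipe $u_s = \psi_s' \circ \psi_s^{-1}$ breaks down the moment $A(s)$ becomes singular, which is exactly what happens at $s = s_{end}^{(i)}$ for the piecewise-linear schedule of Equation~\ref{eq:a_s}. To sidestep this, I would work in the Lagrangian frame and express the trajectory velocity directly in the $(\xv_0, \epsilonv)$ parameterization: differentiating in $s$ gives $\partial_s \psi_s(\xv_0 \mid \epsilonv) = A'(s)(\xv_0 - \epsilonv)$, where $A'(s)$ is the weak derivative guaranteed by $A(s) \in H^1[0,1]$ and hence exists a.e.\ in $s$. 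The CFM target $A'(s)(\xv_0 - \epsilonv)$ appearing in Equation~\ref{eq:CFM_1} is precisely this trajectory velocity, so the loss remains meaningful even on the null set where $A(s)$ is singular.

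If a pointwise Eulerian vector field is required for the generative ODE in Equation~\ref{eq:gen_async_ode}, I would recover it by mollification: replace $A(s)$ with $A_\delta(s) := (1-\delta) A(s) + \delta(1-s) I$, which is strictly positive definite on $[0,1)$ for every $\delta > 0$, construct the usual conditional vector field for $A_\delta$, and pass to the limit $\delta \to 0$ in $L^2$ with respect to the time-marginal $p_s \otimes ds$. This limiting step is where I expect the delicate analysis to live, because one must verify that the conditional expectations $\Eb[\, A_\delta'(s)(\xv_0 - \epsilonv) \mid \xv_s^\delta \,]$ converge in $L^2$ rather than merely distributionally, and that the marginal flow generated by the limiting vector field is indeed the path $p_s$ constructed in the first step.
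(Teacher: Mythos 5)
Your proposal aims at a genuinely weaker target than the paper's proof, and the mismatch is the main issue.

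The paper's proof of Lemma~\ref{lemma:a_s} (Appendix~\ref{app:essential}) defines ``valid flow'' as satisfying three essential properties of flow matching: (1) the conditional vector field $u_s(\xv_s \mid \epsilonv)$ has a closed form (deferred to Proposition~\ref{thm:equivalent}); (2) the marginal vector field $u_s(\xv_s)$ is recovered by marginalizing the conditional one, verified through the continuity equation $\frac{d}{ds} p_s(\xv_s) = -\nabla \cdot [p_s(\xv_s) u_s(\xv_s)]$; and (3) the FM objective $\Lc_{FM}(\theta)$ equals $\Lc_{CFM}(\theta)$ up to a $\theta$-independent constant, verified by an inner-product expansion. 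Properties (2) and (3) are what justify training a model on the CFM loss and then using it as a marginal vector field at sampling time; they are the substantive content. Your proposal does not touch them. Your part (i)---weak continuity of the pushforward law $p_s$ in $s$---is a much weaker and largely orthogonal fact, and your part (ii)---that the Lagrangian velocity $\partial_s \psi_s(\xv_0 \mid \epsilonv) = A'(s)(\xv_0 - \epsilonv)$ exists a.e.\ and makes the CFM loss well-defined---only says that the training target is integrable, not that minimizing it recovers a vector field transporting $p_1$ to $p_0$. Without the marginalization identity and the $\Lc_{FM} = \Lc_{CFM} + c'$ equivalence, the statement that the flow ``remains valid'' for training and generation has not been established.

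A secondary, more localized gap is the mollification step you flag yourself as incomplete. Interestingly, the paper does carry out a version of this argument, but not in the proof of Lemma~\ref{lemma:a_s}: Appendix~\ref{app:invertible} introduces the family $A_{\sigma_{\min}}(s) = (1-\sigma_{\min}) A(s) + \sigma_{\min} I$ (yours is $A_\delta(s) = (1-\delta)A(s) + \delta(1-s)I$, a variant that also preserves $A_\delta(1) = 0$) and passes to the non-invertible limit in the $(H^1[0,1])^{n\times n}$ norm, arguing by completeness of the Sobolev space. However, the paper uses this only as an independent sanity check on the vector field formula, and it operates at the level of convergence of $A_{\sigma_{\min}}'(s)$, not at the harder level you correctly identify, namely $L^2(p_s \otimes ds)$ convergence of the conditional expectations $\Eb[A'_\delta(s)(\xv_0 - \epsilonv) \mid \xv_s^\delta]$. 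Your instinct that this is where the delicate analysis lives is sound, but the step is left open in your sketch, and in any case the paper's route avoids it entirely: Proposition~\ref{thm:equivalent} handles the pseudo-inverse singularities via a weak-derivative integration-by-parts argument showing $A'(s) A(s)^\dagger A(s) = A'(s)$, with the jump set of $A(s)^\dagger A(s)$ absorbed into the null space of $A(s)$, so no mollification or $L^2$ limit of conditional expectations is ever needed. If you want to match the paper's claim, you need to prove the marginalization and objective-equivalence properties directly (continuity equation plus inner-product expansion) rather than stop at well-posedness of the forward map.
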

The lemma shows that the FM is well-posed even for asynchronous noise schedules, as the transformation $A(s)$ preserves the essential properties of FM. We refer the reader to Appendix \ref{app:essential} for a proof of the formal lemma.
\subsection{Invertibility and Differentiability of the Noise Schedule}
Conditional FM~\cite{LipmanCBNL2023} characterizes diffusion processes by defining a flow $\psi_s(\cdot|\epsilonv)$ and the corresponding vector field. This framework operates on the assumption that the interpolation is governed by a known function of $s$. 
A critical assumption in the FM framework is that the flow is invertible. However, this assumption is not always satisfied in the asynchronous formulation, presenting a significant challenge when extending FM to the asynchronous setting. To address this, we introduce theoretical results that relax this assumption, ensuring the framework remains applicable in scenarios where invertibility does not hold.

To characterize the flow $\psi_s(\cdot|\epsilonv)$ as an ODE, we consult the conditional probability path formula \citep[Equation 13]{LipmanCBNL2023}:
\vspace{-10pt}
\begin{equation}
    u_s(\xv_s|\epsilonv)=\psi_s'(\psi_s^{-1}(\xv_s|\epsilonv)|\epsilonv).
\end{equation}
We compute $\psi_s'(\cdot|\epsilonv)$ and define $\psi_s^{-1}(\cdot|\epsilonv)$ as follows:
\begin{align}
    \psi_s'(\xv|\epsilonv)&=A'(s)\xv-A'(s)\epsilonv,\\
    \psi_s^{-1}(\xv_s|\epsilonv)&:=A(s)^{\dagger}(\xv_s-\epsilonv)+\epsilonv,
\end{align}
where $A(s)^{\dagger}$ is the Moore-Penrose Pseudo-Inverse of $A(s)$ \cite{moore1920reciprocal}.
We refer the reader to Lemma \ref{lemma:inverse} for a discussion on the validity of $\psi_s^{-1}(\cdot|\epsilonv)$, notably that it recovers partially diffused components of $\xv_s$.
\begin{remark}
    $A(s)$ may not be differentiable at finitely many points in $[0,1]$.
    Throughout this paper, we interpret $A'(s)$ as the \textit{weak derivative} of $A(s)$, which is guaranteed to exist if $A(s)$ is continuous and piecewise-differentiable.
\end{remark}
\subsection{Equivalence of Objective Functions}
\begin{table*}[ht]
\centering
\caption{Metrics of next event prediction (RMSE of predicted inter-event time / Error Rate of predicted event type). Standard deviations over five seeds are posted below. \textbf{Bold} indicates state-of-the-art results in either RMSE or Error Rate. The hyperparameters for ADiff4TPP are: $d_{\text{latent}}=32,\beta_{\max}=0.01$.
}
\begin{tabular}{@{}lcccccc@{}}
    \toprule
    \textbf{Model} 
    & \textbf{Amazon} 
    & \textbf{Retweet} 
    & \textbf{Taxi} 
    & \textbf{Taobao} 
    & \textbf{StackOverflow} \\ 
    \midrule
    
    \multirow{2}{*}{RMTPP}   
    & 0.559/70.4\% & 26.207/48.4\% & 0.351/11.5\% & 0.257/56.4\% & 1.246/57.6\%\\
    & (0.014/0.008) & (5.650/0.030) & (0.042/0.002) &(0.073/0.000) & (0.293/0.002)\\
    \midrule

    \multirow{2}{*}{NHP}
    & 0.640/70.0\% &22.511/46.1\% &0.342/13.0\% & 0.168/50.7\% &1.324/73.2\% \\
    & (0.002/0.001) & (0.033/0.001) & (0.075/0.007) & (0.098/0.012) & (0.359/0.146) \\
    \midrule
    
    \multirow{2}{*}{SAHP}
    & 0.517/68.0\% & 21.708/46.0\% & 0.335/11.9\% & 0.154/53.6\% & 1.327/57.7\%\\
    & (0.008/0.005) & (0.001/0.001) & (0.175/0.016) & (0.083/0.009) & (0.002/0.002) \\
    \midrule
    
    \multirow{2}{*}{THP}   
    &0.550/65.4\% &26.176/40.5\% &0.375/12.7\% &0.314/54.4\% &1.424/58.0\%\\
    &(0.016/0.002) &(0.059/0.001) &(0.065/0.011) &(0.044/0.017) &(0.012/0.013)\\
    \midrule
    
    \multirow{2}{*}{AttNHP}  

    & 0.755/68.1\% & 22.296/42.8\% & 0.429/14.8\% & 0.280/52.9\% & 1.350/55.4\% \\
    & (0.185/0.011) & (1.135/0.041) & (0.012/0.027) & (0.085/0.019) & (0.018/0.001) \\
    \midrule
    
    \multirow{2}{*}{IFTTPP}  
    & 0.465/\textbf{64.9\%}
    & 22.198/40.0\%
    & 0.357/8.56\%
    & 0.598/44.1\%
    & 1.884/\textbf{54.5\%}\\
    & (0.001/0.001)  & (0.234/0.002)  & (0.013/0.0004)  & (0.103/0.003)   & (0.043/0.008)   \\ 
    \midrule

    \multirow{2}{*}{DTPP}  
    & 0.619/65.5\% & 24.680/40.3\% & 0.302/12.10\%  & 0.587/53.3\%  & 1.780/60.7\%  \\
    & (0.104/0.002)  & (6.364/0.003)  & (0.043/0.012)  & (0.031/0.003)   & (0.167/0.002)   \\ 
    \midrule

    \multirow{2}{*}{\correction{Add and Thin}}  
    & \correction{0.461/N.A.} & \correction{22.914/N.A.} & \correction{0.368/N.A.}  & \correction{0.440/N.A.}  & \correction{1.469/N.A.}  \\
    & \correction{(0.017/N.A.)}  & \correction{(0.348/N.A.)}  & \correction{(0.015/N.A.)}  & \correction{(0.035/N.A.)}   & \correction{(0.238/N.A.)}   \\ 
    \midrule

    \multirow{2}{*}{\correction{HYPRO}}  
    & \correction{0.583/66.2\%} & \correction{20.562/40.0\%} & \correction{0.383/13.46\%}  & \correction{0.307/44.9\%}  & \correction{1.417/55.1\%}  \\
    & \correction{(0.012/0.004)}  & \correction{(1.633/0.049)}  & \correction{(0.008/0.018)}  & \correction{(0.029/0.004)}   & \correction{(0.253/0.002)}   \\ 
    \midrule

    \multirow{2}{*}{ADiff4TPP} 
    & \textbf{0.407}/67.5\% & \textbf{17.880/39.3\%} & \textbf{0.299/8.46\%} & \textbf{0.140/42.6\%} & \textbf{1.226}/61.3\%  \\
    
    & (0.002/0.002)  & (0.051/0.001)  & (0.0002/0.0005)  & (0.054/0.011) & (0.035/0.011)  \\ 
    
    \bottomrule
\end{tabular}
\vspace{-5pt}
\label{tab:next}
\end{table*}
Plugging the derivative and inverse terms into the FM formulation gives us the desired conditional vector field:
\vspace{-3pt}
\begin{align}
    u_s(\cdot|\epsilonv)
    &=A'(s)A(s)^{\dagger}[\xv_s-\epsilonv]\label{eq:flow1}\\
    &\equiv A'(s)[\xv_0-\epsilonv]\label{eq:flow2}.
\end{align}
Because $A(s)^\dagger$ has multiple asymptotes in $s\in[0,1]$, solving the ODE in Equation \ref{eq:flow1} numerically is computationally challenging. For this reason, we prefer to use Equation \ref{eq:flow2}. 
We refer the reader to Proposition \ref{thm:equivalent} for a discussion on the equivalence of the vector fields.
For verification, we also explore a family of invertible noise schedules that satisfy conditions (2)--(4) in Assumption \ref{as:a_s}, and show that they yield the same marginal vector field as we approach the "non-invertible limit" in Appendix \ref{app:invertible}.
Since the asynchronous noise schedule $A(s)$ is decided prior to training, we don't need to train our model to learn it. Instead, we can train our flow model $v_\theta(\xv_s,A(s))$ to predict $\xv_0-\epsilonv$. 
This model is trained by minimizing the CFM objective:
\vspace{-5pt}
\begin{align}\label{eq:CFM}
    &\Lc_{CFM}(\theta)=\nonumber\\
    &\Eb_{s,\xv_0,\epsilonv\sim N(0,I)}\|A'(s)[(\xv_0-\epsilonv)-v_\theta(\xv_s(\xv_0,\epsilonv),A(s))]\|^2.
\vspace{-5pt}
\end{align}
Data is generated by solving the following ODE
\begin{equation}
    \dot\xv_s=A'(s) v_\theta(\xv_s,A(s))
\end{equation}
from $s=1$ to $s=0$ with initial condition $\xv_1$ sampled from $\Nc(0,I)$. 
\begin{remark}
    For numerical methods, we define $A'(s)$ at points of non-differentiability as the left-hand derivative $\lim_{s'\to s^-}A'(s')$, ensuring consistency with the piecewise-defined nature of $A(s)$. For piecewise-linear continuous functions $A(s)$ (e.g. the schedule in section \ref{sec:async_noise_schedule}), solving the ODE in Equation \ref{eq:flow2} with known $\xv_0$ and $\epsilonv$ with this method using Euler's method or RK4 restores $\xv_0$ with zero numerical error.
\end{remark}

\section{Experiments}
\label{Sec:experiments}
\begin{figure*}[ht]
    \centering
    \begin{subfigure}{0.3\linewidth}
    \centering
        \includegraphics[width=\linewidth]{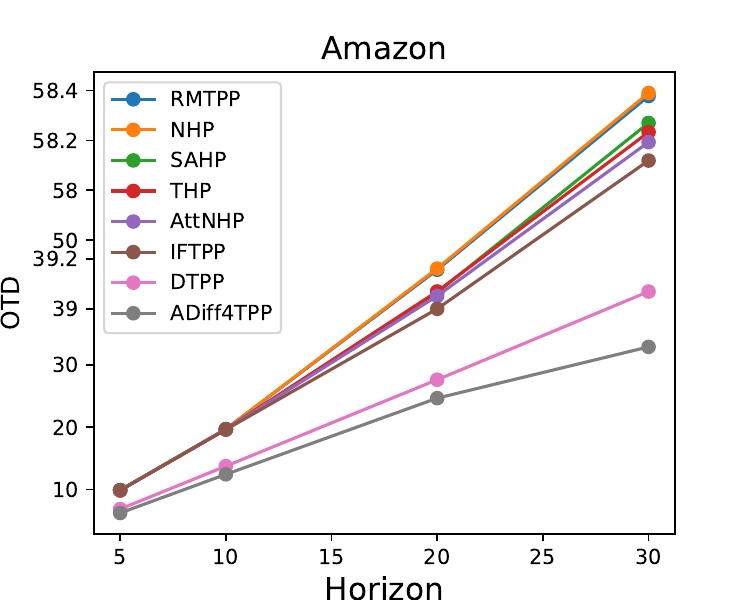}
        \label{fig:otd_a}
    \end{subfigure}
    \hfill
    \begin{subfigure}{0.3\linewidth}
    \centering
        \includegraphics[width=\linewidth]{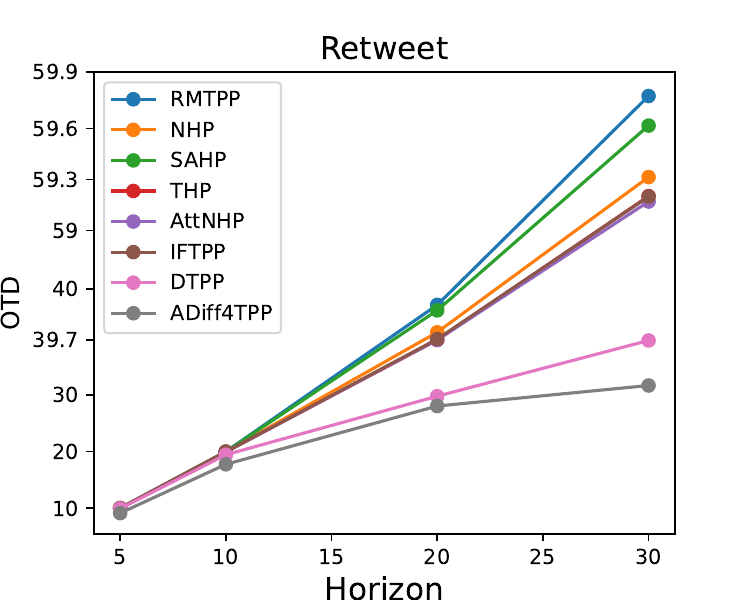}
        \label{fig:otd_b}
    \end{subfigure}
    \hfill
    \begin{subfigure}{0.3\linewidth}
    \centering
        \includegraphics[width=\linewidth]{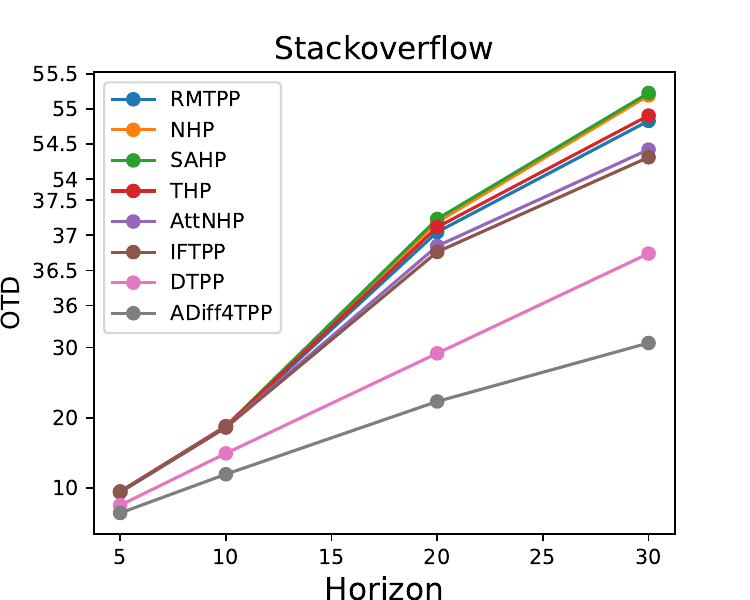}
        \label{fig:otd_c}
    \end{subfigure}
    \vspace{-5pt}
    \caption{Plots of long horizon prediction conducted on three datasets. ADiff4TPP outperforms the baseline methods in each dataset with respect to the OTD metric. The y-axis is scaled to show the difference between baseline methods.}
    \vspace{-5pt}
    \label{fig:otd}
\end{figure*}

\begin{table*}[ht]
\centering
\vspace{-4pt}
\caption{Ablation study results on Next Event prediction.%
}
\begin{tabular}{@{}lcccccc@{}}
    \toprule
    \textbf{Model} 
    & \textbf{Amazon} 
    & \textbf{Retweet} 
    & \textbf{Taobao} 
    & \textbf{StackOverflow} \\     

    \midrule
    
    Disjoint Diffusion 
    & 0.492/69.9\% & 18.813/47.3\% %
    & 0.506/48.2\% & 1.123/65.7\%\\
    $d_{\text{latent}}=32,\beta_{\max}=0.01$
    & (0.009/0.006) & (0.096/0.008) %
    & (0.069/0.027)  & (0.042/0.002)  \\ 
    \midrule
    
    Synchronized Diffusion  
    & 0.432/68.5\% & 18.358/45.8\% %
    & 0.436/45.7\%  & 1.285/61.7\%  \\
    $d_{\text{latent}}=32,\beta_{\max}=0.01$
    & (0.002/0.003) & (0.103/0.003)  %
    & (0.139/0.022)  & (0.041/0.004)  \\ 
    \midrule
    
    Unmasked Diffusion  
    & 0.408/68.0\% & 19.513/45.6\% %
    & 0.221/54.3\% & 1.249/60.6\%  \\
    $d_{\text{latent}}=32,\beta_{\max}=0.01$
    & (0.001/0.002) & (0.964/0.024)  %
    & (0.028/0.030) & (0.019/0.014) \\ 
    \midrule
    
    ADiff4TPP  
    & 0.440/68.7\% & 19.383/53.2\% %
    & 0.354/54.9\%  & \textbf{1.090/57.7\%}  \\
    $d_{\text{latent}}=16,\beta_{\max}=0.01$
    & (0.004/0.002)  & (0.041/0.003)  %
    & (0.155/0.015)   & (0.012/0.006)   \\ 
    \midrule
    
    ADiff4TPP  
    & \textbf{0.407}/67.5\% & 17.880/\textbf{39.3\%} %
    & \textbf{0.140}/\textbf{42.6\%} & 1.226/61.3\%  \\
    $d_{\text{latent}}=32,\beta_{\max}=0.01$
    & (0.002/0.002)  & (0.051/0.001)  %
    & (0.054/0.011) & (0.035/0.011)  \\ 
    \midrule
    
    ADiff4TPP  
    & 0.436/\textbf{67.4\%} & \textbf{17.271}/39.4\% %
    & 0.177/44.1\% & 1.169/63.5\%  \\
    $d_{\text{latent}}=32,\beta_{\max}=0.001$
    & (0.003/0.0003)  & (0.010/0.002) %
    & (0.022/0.010)   & (0.078/0.015)   \\ 
    \bottomrule
\end{tabular}
\vspace{-6pt}
\label{tab:ablation2}
\end{table*}

In this section, we evaluate the performance of ADiff4TPP on prediction tasks for TPP datasets, focusing on two commonly studied tasks: Next event prediction and long horizon prediction. We 
compare
against a selection of
baseline methods (details in Appendix~\ref{app:baseline}).
Note that their numbers are reproduced using the official EasyTPP repository\footnote{https://github.com/ant-research/EasyTemporalPointProcess}. 

\subsection{Next Event Prediction}

We evaluate the next event prediction of the model by comparing the predicted events against the true events in the test set. Our model predicts the event $\tilde{z}_{n}=(\tilde{\tau}_{n},\tilde{k}_{n})$ provided events $z_1,...,z_{n-1}$ for $n=2,...,N$, where $N$ is the length of the event sequence, $\tilde{\tau}_{n}$ is the predicted inter-event time, and $\tilde{k}_{n}$ is the predicted event category. Similarly to the work of \cite{easytpp}, we evaluate our model by comparing the predicted inter-event time against the true inter-event time using the root mean square error (RMSE) metric, and by comparing the predicted event type against the true event type using the error rate metric.

The entire results are presented in Table \ref{tab:next}. It is clear that the ADiff4TPP model excels previous best results on next event time prediction across all datasets. Specially, ours improves the Retweet 
 RMSE by $17\%$ (e.g., 17.88 vs. 21.71). On next event type prediction, ADiff4TPP surpasses the previous state-of-the-art on three datasets (i.e., Retweet, Taxi, and Taobao) and yields competitive results on others.

\subsection{Long Horizon Prediction}

Next, we evaluate ADiff4TPP for long horizon prediction using the optimal transport distance (OTD) between the predicted events and the true events in the prediction interval~\cite{mei2019imputing}. Long horizon predictions are generated by solving the ODE with the vector field provided in Equation \ref{eq:prediction}. The observation window consists of indices $\{1,...,N-h\}$, and the prediction window consists of the index $\{N-h+1,...,N\}$, where $h$ is the horizon. 

We use the implementation of OTD by~\citeauthor{mei2019imputing} It 
uses dynamic programming to find the alignment between inter-event times and compute the distance between event sequences efficiently for each category. Please refer to Appendix~\ref{appendix:OTD} for 
more details.
The results are summarized in Figure~\ref{fig:otd} with the complete results deferred to Appendix~\ref{app:otd}.

We plot the results for three datasets in Figure \ref{fig:otd}. ADiff4TPP achieves state-of-the-art results in long horizon prediction in every dataset over 5, 10, 20, and 30 steps, achieving an average improvement of 24.5\%. Furthermore, the performance gap widens as the horizon increases. We attribute the effectiveness of ADiff4TPP on long horizon prediction to the fact that our asynchronous diffusion design enable the DMs to directly model the joint distribution of multiple events while preserving the sequential structure of the data.

\subsection{Ablation Studies}
\label{sec:ablation}
We investigate the impact of various design choices in our proposed ADiff4TPP framework. Through a series of ablation experiments, we isolate and evaluate the contribution of individual features to the overall performance of the model.

We first assess the importance of latent space in our model. In particular, we explore the impacts of different $\beta$-VAE hyper-parameters on event encoding and reconstructions. We set $\beta_{min}=10^{-5}$, and explored $\beta_{max}\in\{1.0,0.1,0.01,0.001\}$. We let the latent dimension take values of $4,8,16,32$ and also $2$ for the retweet dataset. We find $\beta_{max}\in\{0.01,0.001\}$ and the latent dimension of $16, 32$ deliver almost perfect event reconstruction results. The full experiment results are deferred to the Appendix.
We evaluate the entire model with these set hyper-parameters of the latent space on next event prediction and show the results in the last three rows of Table~\ref{tab:ablation2}. Using a latent dimension of 32 delivers better performance than a latent dimension of 16, except for the StackOverflow dataset.

We continue exploring two other noise schedules for generating predictions: the synchronous noise schedule, which is identical to rectified flow, and the disjoint schedule, which resembles autoregressive generation. Please refer to Appendix \ref{app:ablation} for more detailed descriptions of the two schedules. We show the results in Rows 1--2 of Table~\ref{tab:ablation2} and demonstrate that our choice of asynchronous noise schedule for ADiff4TPP outperforms the other two noise schedules.

Lastly, we explore the importance of masking in our method by comparing the test results of DiT models with and without it. The results in Row 3 of Table~\ref{tab:ablation2} show that the DiT models without masking perform worse than their counterparts without it. Since each event would attend to all the future events, regardless of their noisiness levels without masking, we hypothesize that noisy signals from the distant future events make negative net contributions to the forecasting of the immediate next event.

\section{Related Works}
Following the work of \citet{rombach2022high}, performing diffusion in latent spaces has been of high interest to the DM community.
Text-to-image models like Stable Diffusion \cite{esser2024scaling} have continued to use latent spaces to generate high resolution realistic images.
The work of TabSyn \cite{tabsyn} extends latent diffusion models to the generation of tabular data. The authors trained a $\beta$-VAE to transform each table row into latent vectors and then trained a score-based diffusion model 
\cite{SongDKKEP2021} to generate latent rows.
TabSyn excels at filling in missing data and creating high-quality synthetic data quickly.

DMs with asynchronous noise schedules are a recent area of interest. Groupwise diffusion models \cite{lee2023sequential} divide image data into multiple groups and diffuse each group separately. This approach has been extended to the cascaded generation of images in the frequency space.
In the work of Diffusion Forcing \cite{chen2024diffusionforcingnexttokenprediction}, each time step is assigned a randomly sampled noise level, and the diffusion model is trained to denoise data according to arbitrary noise schedules. This method is applied to video generation, motion planning, and robotics.%

Add-Thin \cite{luedke2023add} is one of the first papers to use diffusion models for TPPs by learning the intensity function. 
The reverse diffusion process is motivated by algorithms such as thinning and superposition that are used in prior intensity-based TPP approaches.
Event Flow \cite{kerrigan2024eventflow} extends diffusion models to TPPs by conditioning on preceding events. The method predicts future events in a prediction horizon through a single denoising process rather than by cascadedly performing next event predictions.

\section{Conclusion}

In this paper, we present an approach for modeling TPPs using diffusion models. It intends to overcome multiple challenges such as modeling data with both continuous and categorical features, generating predictions conditioned on partial observations of preceding data with variable length, and efficient long horizon prediction. To tackle these challenges, we introduce ADiff4TPP, a variant of diffusion models that assumes different noise scales on the latent representations of different events in a sequence.
This enables the model to prioritize near-future events before generating more distant future ones, offering an efficient alternative to autoregressive generation. Furthermore, the asynchronous noise schedule allows flexible control over the prediction window length, enabling both short and long horizon predictions within a single framework.
We train a DiT with a conditional flow matching objective and evaluate it on five different datasets.
Our proposed method significantly outperforms baselines in short and long horizon prediction tasks.
These results highlight the potentials of ADiff4TPP as a powerful tool for event sequence modeling and forecasting.

\section*{Impact Statement}

This paper presents work whose goal is to advance the field of Machine Learning. There are many potential societal consequences of our work, none which we feel must be specifically highlighted here.

\bibliography{example_paper}
\bibliographystyle{icml2025}

\newpage
\appendix
\onecolumn

\section{Proofs on the Flow Matching Objective}

\subsection{Physical Interpretability of the Flow}
\label{app:physical}

\begin{remark}
    Assume the noise schedule $A(s)$ satisfies conditions (1)--(3) in Assumption \ref{as:a_s}.
    The evolution of the distribution at flow time $s$ due to the flow $\psi_s(\cdot|\cdot)$ is:
    \begin{equation}
        p_s(\xv_s|\xv_0)=\Nc\left(\xv_s;A(s)\xv_0,(I-A(s))^T(I-A(s))\right).
    \end{equation}
    The expectation term $A(s)\xv_0$ governs how much of the clean data $\xv_0$ is retained as the distribution evolves. The covariance term $(I-A(s))^T(I-A(s))$ reflects the increasing uncertainty introduced by Gaussian noise.

    The following properties must be satisfied for the flow to have physical interpretability. These properties are observed, not only in flow matching, but also in score-based diffusion models \cite{SongDKKEP2021} and rectified flows \cite{liu2023flow}.
    
    Firstly, at $s=0$, the clean data $\xv_0$ is completely retained, and the covariance term is at a minimum. Consequently, at $s=1$, the clean data $\xv_0$ is no longer retained, and the covariance term is at a maximum. To align with the dissipative nature of diffusion processes \cite{planck1926dissipation}, it suffices for the mean term to be monotone non-increasing with $s$, and the covariance to be monotone non-decreasing with $s$. This ensures that the clean data loses retention and the uncertainty increases. 

    Lastly, the flow needs to be continuous in order for us to model it as an ODE. 
    It suffices for us to show that the conditions met by $A(s)$ satisfy physical interpretability.
    \begin{itemize}
        \item It is sufficient for $A(s)$ to satisfy the boundary conditions shown in condition (1) as $\Eb[\xv_0|\xv_0]=\xv_0,\Eb[\xv_1|\xv_0]=0$ is a sufficient condition for the clean data $\xv_0$ to be obtainable at $s=0$ and unobtainable at $s=1$ due to the evolution of the distribution.
        \item Conditions (2) and (3) are sufficient to show that $\Eb[\xv_0^T\xv_s|\xv_0]=\xv_0^TA(s)\xv_0$ is non-negative and non-increasing with respect to $s$. As a result, $A(s)$ does not amplify any components of $\xv_0$. They also show that $\mathrm{Cov}[\xv_s|\xv_0]=(I-A(s))^T(I-A(s))$ is non-negative and non-decreasing with respect to $s$.

    \end{itemize}
    As a result, the conditions show that $A(s)$ aligns with the dissipative nature of diffusion processes.
\end{remark}

\subsection{Validity of inverse flows}
\label{app:pf_lemma}

\begin{definition}
    The inverse flow $\psi_s^{-1}(\xv_s|\epsilonv)$ is considered to be "well-posed" if all partially diffused elements of $\xv_s$ at flow time $s$ can be reconstructed. Instead of returning a deterministic variable $\xv_0$, it returns a distribution of values $\xv_0$ can take based on partial observations. Sufficiently, the flow $\psi(\xv_s|\epsilonv)$ pushed the distribution of $\psi_s^{-1}(\xv_s|\epsilonv)$ back to the original distribution of $\xv_s$.
\end{definition}

We observe that $\Eb[\psi_s^{-1}(\xv_s|\epsilonv)|\epsilonv]=A(s)^\dagger A(s)\xv_0$, where $A(s)^\dagger A(s)$ acts as a projection matrix, projecting $\xv_0$ onto the rank space of $A(s)$. The components of $\xv_s$ lying in the null space of $A(s)$ are replaced by Gaussian noise during the process.

By aligning the distribution of the generated samples with the forward flow $\psi_s(\cdot|\cdot)$, we aim to recover the distribution of $\xv_s$. Particularly, we find that $\Eb[\psi_s(\psi_s^{-1}(\xv_s|\epsilonv)|\epsilonv)|\epsilonv]=A(s)A(s)^\dagger A(s)\xv_0=A(s)\xv_0$, which demonstrates consistency with the projection behavior of $A(s)$. This observation motivates the need to rigorously establish that $p_s(\psi(\psi_s^{-1}(\xv_s|\epsilonv)|\epsilonv)|\epsilonv)=p_s(\xv_s|\epsilonv)$, ensuring that the reconstructed distribution matches the original one.

\begin{lemma}[Validity of $\psi_s^{-1}$]\label{lemma:inverse}
    The inverse flow $\psi_s^{-1}(\xv_s|\epsilonv)=A(s)^{\dagger}(\xv_s-\epsilonv)+\epsilonv$ is well-posed and reconstructs all of the partially-diffused elements of $\xv_s$ if $A(s)$ satisfies the conditions in Assumption \ref{as:a_s} for all $s\in[0,1]$. Formally, $\psi_s^{-1}(\xv_s|\epsilonv)$ is left-invertible i.e. $\psi_s(\psi_s^{-1}(\xv_s|\epsilonv)|\epsilonv)=\xv_s$.
\end{lemma}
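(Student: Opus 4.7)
The plan is to verify $\psi_s(\psi_s^{-1}(\xv_s|\epsilonv)|\epsilonv)=\xv_s$ by direct substitution, and then to isolate the one nontrivial algebraic fact it reduces to: that $A(s)A(s)^\dagger$ acts as the identity on $\xv_s-\epsilonv$.

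First, I would unfold the composition. By definition of the forward flow,
\begin{equation*}
    \psi_s(\xv_0|\epsilonv)=A(s)\xv_0+(I-A(s))\epsilonv,
\end{equation*}
so substituting $\xv_0 \mapsto \psi_s^{-1}(\xv_s|\epsilonv) = A(s)^\dagger(\xv_s-\epsilonv)+\epsilonv$ gives, after the $A(s)\epsilonv$ terms cancel,
\begin{equation*}
    \psi_s(\psi_s^{-1}(\xv_s|\epsilonv)|\epsilonv)=A(s)A(s)^\dagger(\xv_s-\epsilonv)+\epsilonv.
\end{equation*}
Hence it is enough to show $A(s)A(s)^\dagger(\xv_s-\epsilonv)=\xv_s-\epsilonv$, i.e.\ that $\xv_s-\epsilonv$ lies in the range of $A(s)$, since $A(s)A(s)^\dagger$ is the orthogonal projector onto $\ran(A(s))$.

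The second step is to observe that this range condition is automatic from how $\xv_s$ was constructed. Because $\xv_s$ arose from the forward flow applied to some $\xv_0$,
\begin{equation*}
    \xv_s-\epsilonv=A(s)\xv_0+(I-A(s))\epsilonv-\epsilonv=A(s)(\xv_0-\epsilonv)\in\ran(A(s)).
\end{equation*}
Combining with the defining Moore--Penrose identity $A(s)A(s)^\dagger A(s)=A(s)$ gives
\begin{equation*}
    A(s)A(s)^\dagger(\xv_s-\epsilonv)=A(s)A(s)^\dagger A(s)(\xv_0-\epsilonv)=A(s)(\xv_0-\epsilonv)=\xv_s-\epsilonv,
\end{equation*}
so the composition equals $\xv_s$, proving left-invertibility.

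The only conceptual subtlety, which I would flag explicitly, is that right-invertibility of $\psi_s^{-1}$ does \emph{not} hold in general when $A(s)$ is rank-deficient: on components lying in $\ker(A(s))$ the map $\psi_s$ discards information, so the "reconstruction" $\psi_s^{-1}(\xv_s|\epsilonv)$ only recovers the clean signal on $\ran(A(s)^\top)$ and substitutes $\epsilonv$ on the orthogonal complement. This is precisely the distributional interpretation already stated before the lemma, and it is the reason Assumption~\ref{as:a_s} is needed to guarantee that $A(s)$ remains a well-behaved positive semi-definite projector-like object so that $A(s)^\dagger$ is defined and continuous away from the finitely many jumps. Since the lemma only asserts left-invertibility ($\psi_s\circ\psi_s^{-1}=\mathrm{id}$ on points of the forward image), no further assumption is needed beyond the two ingredients above. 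I expect no genuine obstacle; the main care is in bookkeeping what "well-posed" means here and in citing the correct pseudoinverse identity.
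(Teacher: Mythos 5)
Your proof is correct and follows essentially the same route as the paper's: both reduce the verification to the Moore--Penrose identity $A(s)A(s)^\dagger A(s)=A(s)$ after expanding $\xv_s=A(s)\xv_0+(I-A(s))\epsilonv$. Your presentation is marginally cleaner in that you factor out $\xv_s-\epsilonv=A(s)(\xv_0-\epsilonv)\in\ran(A(s))$ first and invoke the projector interpretation of $A(s)A(s)^\dagger$, whereas the paper carries the full expression through and cancels terms at the end; the underlying algebra is the same, and your closing remark about the failure of right-invertibility on $\ker(A(s))$ correctly mirrors the discussion the paper places just before the lemma.
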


\begin{proof}

    To prove that our choice of $\psi_s^{-1}$ is valid, we need to show that 
    $\psi_s(\psi_s^{-1}(\xv_s|\epsilonv)|\epsilonv)=\xv_s$.
    \begin{equation}
    \begin{split}
        &\psi_s(\psi_s^{-1}(\xv_s|\epsilonv)|\epsilonv)\\
        &=A(s)\psi_s^{-1}(\xv_s|\epsilonv)+(I-A(s))\epsilonv\\
        &=A(s)A(s)^{\dagger}(\xv_s-\epsilonv)+A(s)\epsilonv+(I-A(s))\epsilonv\\
        &=A(s)A(s)^{\dagger}\xv_s+(I-A(s)A(s)^{\dagger})\epsilonv.
    \end{split}
    \end{equation}
    We expand $\xv_0$ and use the property of the Moore-Penrose Pseudo-Inverse that $A(s)A(s)^{\dagger}A(s)=A(s)$ to restore $\xv_s$.
    \begin{equation}
    \begin{split}
        &\psi_s(\psi_s^{-1}(\xv_s|\epsilonv)|\epsilonv)\\
        &=A(s)A(s)^{\dagger}[A(s)\xv_0+(I-A(s))\epsilonv]+(I-A(s)A(s)^{\dagger})\epsilonv\\
        &=A(s)\xv_0+A(s)A(s)^{\dagger}\epsilonv-A(s)\epsilonv+(I-A(s)A(s)^{\dagger})\epsilonv\\
        &=A(s)\xv_0+(I-A(s))\epsilonv\\
        &=\xv_s.
    \end{split}
    \end{equation}
    This completes the proof.
\end{proof}

\subsection{Equivalence of flow matching Objectives}
\label{app:pf_thm}

\begin{proposition}\label{thm:equivalent}
    Let $A(s)\in (H^1[0,1])^{n\times n}$ satisfy the conditions of Assumption \ref{as:a_s}.
    Then the vector fields $A'(s)A(s)^{\dagger}[\xv_s-\epsilonv]$ and $A'(s)[\xv_0-\epsilonv]$ are equivalent.
\end{proposition}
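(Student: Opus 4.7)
The plan is to reduce the identity to an algebraic statement about the pseudoinverse and then exploit the monotonicity of $A(s)$ to establish that statement almost everywhere.

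First I would substitute the definition of $\xv_s$ into the first vector field. A direct computation gives $\xv_s - \epsilonv = A(s)\xv_0 - A(s)\epsilonv = A(s)(\xv_0 - \epsilonv)$, so
\[
A'(s) A(s)^\dagger [\xv_s - \epsilonv] = A'(s)\, A(s)^\dagger A(s)\, (\xv_0 - \epsilonv).
\]
The proposition therefore reduces to showing
\[
A'(s)\bigl[I - A(s)^\dagger A(s)\bigr](\xv_0 - \epsilonv) = 0.
\]
Because $A(s)$ is symmetric (being PSD), a standard property of the Moore--Penrose pseudoinverse gives $A(s)^\dagger A(s) = P_{R(A(s))}$, the orthogonal projector onto the range of $A(s)$, so that $I - A(s)^\dagger A(s) = P_{\ker A(s)}$. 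The task is thus to show that $A'(s)$ vanishes on $\ker A(s)$ for almost every $s \in [0,1]$.

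The key step is an argument via Assumption \ref{as:a_s}(3). For any fixed $v \in \Rb^n$ and any $s_0$ with $A(s_0) v = 0$, monotonicity yields $\|A(s) v\| \leq \|A(s_0) v\| = 0$ for all $s \geq s_0$, so $A(s) v \equiv 0$ on $[s_0, 1]$. Consequently the weak derivative $A'(\cdot)v$ vanishes a.e.\ on $(s_0, 1)$. To pass from this ``fixed-$v$'' statement to the desired claim about $v \in \ker A(s)$, I would use that $s \mapsto \dim \ker A(s)$ is integer-valued and non-decreasing (again by monotonicity), hence has finitely many jumps $0 = c_0 < c_1 < \dots < c_m = 1$. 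On each open interval $(c_j, c_{j+1})$ the kernel is a fixed subspace $K_j$; applying the previous argument to each vector in a fixed basis of $K_j$ yields $A'(s) w = 0$ for all $w \in K_j$ and a.e.\ $s \in (c_j, c_{j+1})$. Taking the union over $j$ gives $A'(s)\, P_{\ker A(s)} = 0$ for a.e.\ $s \in [0,1]$, which is exactly what we need.

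Combining the two steps yields $A'(s) A(s)^\dagger (\xv_s - \epsilonv) = A'(s)(\xv_0 - \epsilonv)$ for almost every $s$, which suffices both for the integrand of the CFM objective in Equation \ref{eq:CFM_1} and for the generative ODE, consistent with the weak-derivative/left-hand-derivative convention stated in the remark preceding the proposition. The main obstacle is the $s$-dependence of $\ker A(s)$: one cannot simply differentiate the trivial identity $A(s)[I - A(s)^\dagger A(s)] = 0$, because $A(s)^\dagger A(s)$ is itself a projector that can jump in rank. The partition-based monotonicity argument sketched above circumvents this; for the concrete piecewise-linear diagonal schedule of Section \ref{sec:async_noise_schedule}, the jump times $c_j$ are exactly the endpoints $s_{end}^{(i)}$, and the identity in fact holds pointwise on every subinterval strictly between them.
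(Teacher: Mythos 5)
Your proof is correct, and it takes a genuinely different route from the paper's. Both arguments reduce the claim to the algebraic identity $A'(s)A(s)^{\dagger}A(s)=A'(s)$ (almost everywhere), but the two proofs of that identity differ. The paper integrates $A'(s)A(s)^{\dagger}A(s)$ against a test function, performs integration by parts, and interprets $\frac{d}{ds}[A(s)^{\dagger}A(s)]$ as a sum of Dirac deltas supported at the rank-changing times; it then observes that $A(s')$ annihilates the jump directions, so the extra term drops out. You instead observe that $I-A(s)^{\dagger}A(s)=P_{\ker A(s)}$ and work directly with $A'(s)$ restricted to $\ker A(s)$: by Assumption \ref{as:a_s}(3) the kernels are nested, so $\ker A(\cdot)$ is piecewise-constant with finitely many jumps, and on each constancy interval $A(\cdot)w\equiv 0$ for every $w$ in the (fixed) kernel, forcing $A'(\cdot)w=0$ a.e.\ there. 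Your version buys rigor: the paper's characterization of the distributional derivative of the projector-valued map as ``a collection of Dirac deltas,'' followed by a sifting-property evaluation, is intuitively right but not fully justified, whereas your partition-plus-monotonicity argument only invokes the standard fact that a locally constant $H^1$ function has vanishing weak derivative. You also correctly flag why the naive route (differentiating $A(s)[I-A(s)^{\dagger}A(s)]=0$) fails — exactly the subtlety the paper's delta-function heuristic is papering over. One small remark: the symmetry of $A(s)$ is not actually needed for $I-A(s)^{\dagger}A(s)$ to be the projector onto $\ker A(s)$; that holds for any matrix. The place where positive semi-definiteness is genuinely used in the paper's proof (to pass from $\xv^{\top}A(s')\xv=0$ to $A(s')\xv=0$) is bypassed entirely in yours, since you obtain $A(s)w=0$ directly from monotonicity.
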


\begin{proof}
    Expanding $\xv_s$ gives
    \begin{equation}
    \begin{split}
        &A'(s)A(s)^{\dagger}[\xv_s-\epsilonv]\\
        &=A'(s)A(s)^{\dagger}[A(s)\xv_0+(1-A(s))\epsilonv-\epsilonv]\\
        &=A'(s)A(s)^{\dagger}A(s)[\xv_0-\epsilonv].
    \end{split}
    \end{equation}
    To simplify this expression, we need to interpret $A'(s)A(s)^{\dagger}A(s)$, where $A'(s)$ is the weak derivative of $A(s)$. Let $\xi(s)$ be a scalar function that is zero outside of $[0,1]$.
    \begin{equation}
    \begin{split}
        &\int_0^1 A'(s)A(s)^{\dagger}A(s)\xi(s)ds\\
        =&-\int_0^1 A(s)\frac{d}{ds}[A(s)^{\dagger}A(s)]\xi(s)ds\\
        &-\int_0^1 A(s)A(s)^{\dagger}A(s)\xi'(s)ds\\
        =&-\int_0^1 A(s)\frac{d}{ds}[A(s)^{\dagger}A(s)]\xi(s)ds-\int_0^1 A(s)\xi'(s)ds\\
        =&-\int_0^1 A(s)\frac{d}{ds}[A(s)^{\dagger}A(s)]\xi(s)ds+\int_0^1 A'(s)\xi(s)ds.
    \end{split}
    \end{equation}
    Consider the projection term $\frac{d}{ds}[A(s)^{\dagger}A(s)]$ as a collection of dirac delta functions such that $A(0)^{\dagger}A(0)=I$ and $A(1)^{\dagger}A(1)=0$.

    The dirac delta are centered at flow times $s'$ where some components of $A(s')$ are fully diffused. In other words, there exists a vector $\xv$ that is in the rank space of $A(s)$ at $s<s'$ and in the null space of $A(s)$ at $s\geq s'$.
    \begin{equation}
        \xv^T\left[\int_0^s A(s)\frac{d}{ds}[A(s)^{\dagger}A(s)]\xi(s)ds\right]\xv=\xv^TA(s')\xi(s')\xv=\xv^TA(s')\xv\xi(s')=0,
    \end{equation}
    since $\xv^TA(s')\xv=0$. This shows that the columns of $\frac{d}{ds}[A(s)^{\dagger}A(s)]$ are in the null space of $A(s)$ if $A(s)^{\dagger}A(s)$ is discontinuous, thus showing that
    \begin{equation}
        \int_0^1 A(s)\frac{d}{ds}[A(s)^{\dagger}A(s)]\xi(s)ds=0.
    \end{equation}
    Since this holds for all $\xi(s)$ with compact support, it shows that $A'(s)A(s)^{\dagger}A(s)=A'(s)$, which completes the proof.
\end{proof}

\subsection{Essential Properties of flow matching}\label{app:essential}

\textbf{Lemma \ref{lemma:a_s}.}
We argue that the properties of the conditional vector field $u_s(\xv_s|\epsilonv)$ still hold even if we relax the invertible assumptions of $A(s)$ as long as the conditions of Assumption \ref{as:a_s} hold.
We characterize the essential properties of flow matching as follows:
\begin{enumerate}
    \item The conditional vector field $u_s(\xv_s|\epsilonv)$ can be derived in closed form.

    \item The marginal vector field $u_s(\xv_s)$
    can be constructed by marginalizing over the conditional vector field $u_s(\xv_s|\epsilonv)$ that generates the flow.
    In other words: $\Eb_{\epsilonv\sim\Nc(0,I)}[\frac{p_s(\xv_s|\epsilonv)}{p_s(\xv_s)}u_s(\xv_s|\epsilonv)]=u_s(\xv_s)$.
    
    \item The training objective for the marginal vector field $A'(s)v_\theta(\xv_s,A(s))$ can be expressed in terms of the conditional vector field $u_s(\xv_s|\epsilonv)$. %
\end{enumerate}

The flow $\psi(\xv_s|\epsilonv)$ in Equation~\ref{eq:sampling_s_1} governed by a matrix-valued coefficient $A(s)\in (H^1[0,1])^{n\times n}$ remains valid as long as the following conditions are met:
\begin{enumerate}
    \item $A(s)$ satisfies the boundary conditions: $A(0)=I,A(1)=0$.
    \item $A(s)$ is positive semi-definite for all $s\in[0,1]$
    \item $A(s)$ is monotone non-increasing i.e. $\|A(s)\xv\|\leq\|A(s')\xv\|$ for all $\xv\in\Rb^n$ if $s\geq s'$.
    \item $A(s)$ is continuous for all $s\in[0,1]$.
\end{enumerate}

\begin{proof}
We will proceed to show that these conditions are still satisfied. We can expand the conditional vector field as $u_s(\xv_s|\epsilonv):=A'(s)\nu_s(\xv_s|\epsilonv)$ and $u_s(\xv_s):=A'(s)\nu_s(\xv_s)$. While condition (1) is straightforward, the demonstration of conditions (2) and (3) is remarkably identical to \citet[Appendix B.1]{esser2024scaling}.
\begin{enumerate}
    \item This condition is trivially satisfied as shown in Proposition \ref{thm:equivalent}. 
    \item The continuity equation provides a sufficient condition to determine if $u_s(\xv_s)$ models the evolution of the distribution $p_s(\xv_s)$:
    \begin{equation}
        \frac{d}{ds}p_s(\xv_s)
        =-\nabla\cdot[p_s(\xv_s)u_s(\xv_s)]
        =-\nabla\cdot[p_s(\xv_s)A'(s)\nu_s(\xv_s)].
    \end{equation}
    This equation can be expanded to show that the marginal vector field $u_s(\xv_s|\epsilonv)$ models the evolution of the marginal distribution $p_s(\xv_s|\epsilonv)$:
    \begin{equation}
    \begin{split}
        \nabla\cdot[p_s(\xv_s)A'(s)\nu_s(\xv_s)]
        &=\nabla\cdot[\Eb_{\epsilonv\sim\Nc(0,I)}[A'(s)\nu_s(\xv_s|\epsilonv)\frac{p_s(\xv_s|\epsilonv)}{p_s(\xv_s)}p_s(\xv_s)]]\\
        &=\Eb_{\epsilonv\sim\Nc(0,I)}[\nabla\cdot[A'(s)\nu_s(\xv_s|\epsilonv)\frac{p_s(\xv_s|\epsilonv)}{p_s(\xv_s)}p_s(\xv_s)]]\\
        &=\Eb_{\epsilonv\sim\Nc(0,I)}[-\frac{d}{ds}p_s(\xv_s|\epsilonv)]\\
        &=-\frac{d}{ds}p_s(\xv_s)
    \end{split}
    \end{equation}

    \item To prove this condition, it is first sufficient
    to show that  $\langle A'(s)v_\theta(\xv_s,A(s)), u_s(\xv_s)\rangle$ can be constructed from $\langle A'(s)v_\theta(\xv_s,A(s)), u_s(\xv_s|\epsilonv)\rangle$.
    \begin{equation}
    \begin{split}
        &\Eb_{s,p_s(\xv_s|\epsilonv),p(\epsilonv)}\langle A'(s)v_\theta(\xv_s,A(s)), A'(s)\nu_s(\xv_s|\epsilonv)\rangle\\
        &=\iint\langle A'(s)v_\theta(\xv_s,A(s)), A'(s)\nu_s(\xv_s|\epsilonv)\rangle p_s(\xv_s|\epsilonv)p(\epsilonv)d\xv_sd\epsilonv\\
        &=\int\langle A'(s)v_\theta(\xv_s,A(s)), A'(s)\int\frac{p_s(\xv_s|\epsilonv)p(\epsilonv)}{p_s(\xv_s)}\nu_s(\xv_s|\epsilonv)d\epsilonv\rangle p_s(\xv_s)d\xv_s\\
        &=\int\langle A'(s)v_\theta(\xv_s,A(s)), A'(s)\nu_s(\xv_s)\rangle p_s(\xv_s)d\xv_s\\
        &=\Eb_{s,p_s(\xv_s)}\langle A'(s)v_\theta(\xv_s,A(s)), A'(s)\nu_s(\xv_s)\rangle.
    \end{split}
    \end{equation}
    We can then derive the conditional flow matching objective, $\Lc_{CFM}(\theta)$, from the flow matching objective, $\Lc_{FM}(\theta)$.
    \begin{equation}
    \begin{split}
        \Lc_{FM}(\theta)
        &=\Eb_{s,p_s(\xv_s)}\|A'(s)[v_\theta(\xv_s,A(s))-u_s(\xv_s)]\|^2\\
        &=\Eb_{s,p_s(\xv_s)}\|A'(s)v_\theta(\xv_s,A(s))\|^2-2\Eb_{s,p_s(\xv_s)}\langle A'(s)v_\theta(\xv_s,A(s)), A'(s)u_s(\xv_s)\rangle+c\\
        &=\Eb_{s,p_s(\xv_s)}\|A'(s)v_\theta(\xv_s,A(s))\|^2-2\Eb_{s,p_s(\xv_s|\epsilonv),p(\epsilonv)}\langle A'(s)v_\theta(\xv_s,A(s)), A'(s)u_s(\xv_s|\epsilonv)\rangle+c\\
        &=\Eb_{s,p_s(\xv_s|\epsilonv),p(\epsilonv)}\|A'(s)[v_\theta(\xv_s,A(s))-u_s(\xv_s|\epsilonv)]\|^2+c'\\
        &=\Lc_{CFM}(\theta)+c',
    \end{split}
    \end{equation}
    where $c,c'$ are constants that do not depend on $\theta$.
\end{enumerate}
\end{proof}

\section{An Invertible Noise Schedule}
\label{app:invertible}

In this section, we will consider analogs to the asynchronous flow matching by strictly enforcing invertibility in the flow. Let $\sigma_{\min}\in(0,1)$ be a small constant.
Consider a family of noise schedules $A_{\sigma_{\min}}(s)\in (H^1[0,1])^{n\times n}$ that satisfies the following conditions for $\sigma_{\min}$, modified from Assumption \ref{as:a_s}:
\begin{enumerate}
    \item $A_{\sigma_{\min}}(s)$ satisfies the boundary conditions: $A_{\sigma_{\min}}(0)=I,A_{\sigma_{\min}}(1)=\sigma_{\min}I$.
    \item $A_{\sigma_{\min}}(s)$ is \underline{positive definite} for all $s\in[0,1]$
    \item $A_{\sigma_{\min}}(s)$ is monotone non-increasing. We define monotone non-increasing in matrices as satisfiyng $\|A_{\sigma_{\min}}(s)\xv\|\leq\|A_{\sigma_{\min}}(s')\xv\|$ for all $\xv\in\Rb^n$ if $s\geq s'$.
    \item $A_{\sigma_{\min}}(s)$ is continuous for all $s\in[0,1]$.
\end{enumerate}

Without loss of generality, we can let $A_{\sigma_{\min}}(s)$ be continuous with respect to $\sigma_{\min}\in[0,1]$ in the $(H^1[0,1])^{n\times n}$ norm i.e. for all $\epsilon>0$, there exists a $\delta>0$ such that
\begin{align*}
    |\sigma_{\min}-\sigma_{\min}'|<\delta\implies&
    \|A_{\sigma_{\min}}(s)-A_{\sigma_{\min}}(s)\|_{(H^1[0,1])^{n\times n}}\\
    &=\|A_{\sigma_{\min}}(s)-A_{\sigma_{\min}}(s)\|_{(L_2[0,1])^{n\times n}}
    +\|A_{\sigma_{\min}}'(s)-A_{\sigma_{\min}}'(s)\|_{(L_2[0,1])^{n\times n}}<\epsilon.
\end{align*}
Note that the $(L_2[0,1])^{n\times n}$ norm is defined as
\begin{equation*}
    \|A(s)\|_{(L_2[0,1])^{n\times n}}^2
    =\sum_{i=1}^{n}\sum_{j=1}^{n}\int_0^1|A_{ij}(s)|^2ds.
\end{equation*}
An example of a family $A_{\sigma_{\min}}(s)$ is: 
\begin{equation*}
    A_{\sigma_{\min}}(s)=(1-\sigma_{\min})A(s)+\sigma_{\min}I,
\end{equation*}
where $A(s)$ satisfies the conditions in Assumption \ref{as:a_s}.
We define the flow $\psi(\cdot|\epsilonv)$ as
\begin{equation*}
    \psi_s(\cdot|\epsilonv)=A_{\sigma_{\min}}(s)\xv_0+(1-A_{\sigma_{\min}}(s))\epsilonv,\epsilonv\sim\Nc(0,I).
\end{equation*}
It's derivative is
\begin{equation*}
    \psi_s'(\xv|\epsilonv)=A_{\sigma_{\min}}'(s)[\xv-\epsilonv]
\end{equation*}
The inverse flow is
\begin{equation*}
    \psi_s^{-1}(\xv_s|\epsilonv)=A_{\sigma_{\min}}(s)^{-1}(\xv_s-\epsilonv)+\epsilonv,
\end{equation*}
since $A_{\sigma_{\min}}(s)$ is invertible for $\sigma_{\min}>0$.
The marginal vector field $u(\xv_s|\epsilonv)$ is now defined as
\begin{align}
    u_s(\cdot|\epsilonv)
    &=A_{\sigma_{\min}}'(s)A_{\sigma_{\min}}(s)^{-1}[\xv_s-\epsilonv]\\
    &\equiv A_{\sigma_{\min}}'(s)[\xv_0-\epsilonv],
\end{align}
where we trivially use the result $A(s)^{-1}A(s)=I$. This vector field holds for all $\sigma_{\min}\in(0,1)$, which is the domain in which $A_{\sigma_{\min}}(s)^{-1}$ is defined.

We note that $(H^1[0,1])^{n\times n}$ is a Hilbert space, meaning it is \emph{complete} with respect to the $\|\cdot\|_{(H^1[0,1])^{n\times n}}$ norm. Because $A_{\sigma_{\min}}(s)$ is continuous in $\sigma_{\min}$ in this norm, the family of functions $\{A_{\sigma_{\min}}(s)\}_{\sigma_{\min}\in(0,1)}$ is \emph{Cauchy} in $(H^1[0,1])^{n\times n}$. As a result, there exists a limit function $\lim_{\sigma_{\min}\to 0}A_{\sigma_{\min}}(s)=A_0(s)\triangleq A(s)$ with convergence in the $\|\cdot\|_{(H^1[0,1])^{n\times n}}$ norm. 

By extension, we conclude that $\lim_{\sigma_{\min}\to 0}A_{\sigma_{\min}}'(s)=A'(s)$. 
Thus, the marginal vector field at $\sigma_{\min}=0$ is
\begin{equation*}
    u_s(\cdot|\epsilonv)=A'(s)[\xv_0-\epsilonv]
\end{equation*}

\section{Summary of Baseline Models}\label{app:baseline}

We demonstrate strong performance against a wide range of TPP benchmarks. The authors of \citet{easytpp} provide implementations and evaluations are provided for widely cited models in the field: 
\begin{enumerate}
    \item Recurrent marked temporal point process (RMTPP) \cite{Du2016RMTPP}
    \item Neural Hawkes Process (NHP) \cite{mei2017NHP}
    \item Self-attentive Hawkes process (SAHP) \cite{zhang2020SAHP}
    \item Transformer Hawkes process (THP) \cite{zuo2020THP}
    \item Attentive neural Hawkes process (AttNHP) \cite{mei2021AttNHP}
    \item Intensity-free TPP (IFTPP) \cite{shchurIFTPP}
\end{enumerate}

Another baseline method we include is Decomposable Transformer Point Processes (DTPP) \cite{panos2024decomposable}. To ensure fairness in our comparisons, we modify its implementation by training the model to predict the inter-event time directly, rather than its logarithm, aligning it with the prediction approach used by other methods.

\correction{As the first paper to use diffusion models for TPPs, we included Add and Thin \cite{luedke2023add} to our baselines. Since Add and Thin only predicts the inter-event time instead of event type, we only reported the RMSE in Table \ref{tab:next}. Consequently, we did not compute the OTD since event types are missing.}

\correction{Since HYPRO \cite{xue2022hypro} is an important baseline for long-horizon prediction in the TPP literature, we also included it in our baselines.}

We have showcased the performance of our proposed method by comparing it against these benchmarks.

\newpage

\section{VAE Training Results}

We provide an ablation study on our choice for latent dimensions and regularization strengths $\beta_{max}$ by assessing the performance of a trained VAE on the TPP datasets. We evaluate the inter-event time reconstruction of the VAE using the mean squared error (MSE) metric, and the event type reconstruction using the accuracy metric.

We provide a separate table for the ablation study on the Retweet dataset. This is because the inter-event times take significantly larger values than the other datasets, which is why the MSE also takes larger values.

For our experiments, we decided on latent dimensions of size $16$ and $32$ and $\beta_{max}$ of $0.01$ and $0.001$, as they gave the best results in this study. 

\begin{table}[H]
    \centering
    \begin{tabular}{c|c|c|c|c|c}
        Dataset & Latent Dimension & $\beta_{max}$ & Test MSE (Time) & Test Accuracy (Event) & KL Divergence \\
        \hline
        Taxi & 8 & 0.01 & 0.000008 & 1.00 & 1.422255\\
        Taobao & 8 & 0.01 & 0.000007 & 1.00 & 2.675954\\
        Stackoverflow & 8 & 0.01 & 0.000007 & 1.00 & 1.302106\\
        Amazon & 8 & 0.01 & 0.000004 & 1.00 & 0.546956\\
        \hline
        Taxi & 8 & 0.001 & 0.000066 & 1.00 & 1.595017\\
        Taobao & 8 & 0.001 & 0.000307 & 1.00 & 1.727479\\
        Stackoverflow & 8 & 0.001 & 0.000006 & 1.00 & 1.618665\\
        Amazon & 8 & 0.001 & 0.000003 & 1.00 & 1.749470\\
        \hline
        Taxi & 16 & 0.01 & 0.000004 & 1.00 & 0.660862\\
        Taobao & 16 & 0.01 & 0.000001 & 1.00 & 0.510525\\
        Stackoverflow & 16 & 0.01 & 0.000006 & 1.00 & 0.972457\\
        Amazon & 16 & 0.01 & 0.000001 & 1.00 & 0.431824\\
        \hline
        Taxi & 16 & 0.001 & 0.000002 & 1.00 & 1.175252\\
        Taobao & 16 & 0.001 & 0.000002 & 1.00 & 1.826852\\
        Stackoverflow & 16 & 0.001 & 0.000003 & 1.00 & 2.593421\\
        Amazon & 16 & 0.001 & 0.000001 & 1.00 & 0.486753\\
        \hline
        Taxi & 32 & 0.01 & 0.000005 & 1.00 & 0.847024\\
        Taobao & 32 & 0.01 & 0.000001 & 1.00 & 1.100021\\
        Stackoverflow & 32 & 0.01 & 0.000006 & 1.00 & 0.969485\\
        Amazon & 32 & 0.01 & 0.000001 & 1.00 & 0.226177\\
        \hline
        Taxi & 32 & 0.001 & 0.000002 & 1.00 & 2.596737\\
        Taobao & 32 & 0.001 & 0.000009 & 1.00 & 3.441910\\
        Stackoverflow & 32 & 0.001 & 0.000003 & 1.00 & 2.321191\\
        Amazon & 32 & 0.001 & 0.000004 & 1.00 & 0.286039\\
        \hline
        Taxi & 8 & 0.1 & 0.000036 & 1.00 & 0.464769\\
        Taobao & 8 & 0.1 & 0.000040 & 1.00 & 0.367910\\
        Stackoverflow & 8 & 0.1 & 0.000029 & 1.00 & 0.639186\\
        Amazon & 8 & 0.1 & 0.000019 & 1.00 & 0.348057\\
        \hline
        Taxi & 4 & 0.01 & 0.000114 & 0.999865 & 1.591100\\
        Taobao & 4 & 0.01 & 0.000279 & 1.00 & 1.710422\\
        Stackoverflow & 4 & 0.01 & 0.000169 & 1.00 & 2.114793\\
        Amazon & 4 & 0.01 & 0.000013 & 0.999988 & 1.121674\\
        \hline
        Taxi & 8 & 1.0 & 0.000106 & 1.00 & 0.282435\\
        Taobao & 8 & 1.0 & 0.000076 & 1.00 & 0.337149\\
        Stackoverflow & 8 & 1.0 & 0.000363 & 1.00 & 0.645850\\
        Amazon & 8 & 1.0 & 0.000102 & 1.00 & 0.270750\\
    \end{tabular}
    \caption{Overview of the training results of VAEs in four TPP benchmark datasets with different hyperparameters. $\beta_{max}$ corresponds to the weight we placed on the KL Divergence term.}
    \label{tab:vae_training}
\end{table}

\begin{table}[H]
    \centering
    \begin{tabular}{c|c|c|c|c|c}
        Dataset & Latent Dimension & $\beta_{max}$ & Test MSE (Time) & Test Accuracy (Event) & KL Divergence \\
        \hline
        Retweet & 2 & 1.0 & 0.002049 & 1.00 & 3.173542\\
        Retweet & 2 & 0.1 & 0.014252 & 0.999804 & 3.082116\\
        Retweet & 2 & 0.01 & 0.134966 & 0.591000 & 55.257951\\
        Retweet & 2 & 0.001 & 0.004303 & 0.542401 & 158.649788\\
        \hline
        Retweet & 4 & 1.0 & 0.029607 & 0.999640 & 1.486775\\
        Retweet & 4 & 0.1 & NaN & 0.519590 & NaN\\
        Retweet & 4 & 0.01 & 0.000839 & 0.999967 & 8.499786\\
        Retweet & 4 & 0.001 & NaN & 0.488553 & NaN\\
        \hline
        Retweet & 8 & 1.0 & 0.001032 & 1.00 & 0.667470\\
        Retweet & 8 & 0.1 & 0.004570 & 1.00 & 1.273117\\
        Retweet & 8 & 0.01 & 0.000254 & 1.00 & 1.857849\\
        Retweet & 8 & 0.001 & 0.000138 & 1.00 & 4.784689\\
        \hline
        Retweet & 16 & 1.0 & 0.000290 & 1.00 & 0.276919\\
        Retweet & 16 & 0.1 & 0.000208 & 1.00 & 0.810816\\
        Retweet & 16 & 0.01 & 0.000250 & 1.00 & 1.547005\\
        Retweet & 16 & 0.001 & 0.000031 & 1.00 & 2.757766\\
        \hline
        Retweet & 32 & 1.0 & 0.000648 & 1.00 & 0.185612\\
        Retweet & 32 & 0.1 & 0.000036 & 1.00 & 0.560295\\
        Retweet & 32 & 0.01 & 0.000081 & 1.00 & 1.235596\\
        Retweet & 32 & 0.001 & 0.000015 & 1.00 & 2.917406\\
    \end{tabular}
    \caption{Training results of VAEs on the Retweet dataset. This table is posted separately because the time between events takes a larger range of values.}
    \label{tab:vae_training_retweet}
\end{table}

\newpage

\section{Algorithms for Testing}

In this section, we provide the method we used for forecasting future events conditioned on preceding events in an algorithmic format. We refer the reader to section \ref{sec:conditonal_generation} for a discussion that motivates our algorithms.

\subsection{Next Event Prediction Evaluation}

We present our methods for next event prediction. This is obtained by solving the ODE defined element-wise in Equation \ref{eq:prediction}. The observation window is $O=\{1,...,n-1\}$ and the prediction window is $P=\{n\}$. This ensures that preceding events are reconstructed exactly and the model generates the $n$-th event. This is repeated for $n=2,...,N$.

The ODE is solved from $s=s_{end}^{(n)}$ to $s=s_{start}^{(n)}$, where $s_{end}^{(n)}$ and $s_{start}^{(n)}$ are defined in Equation \ref{eq:s_start_end}. The initial condition is $\xv(s_{end}^{(n)})=A(s_{end}^{(n)})\xv_0+(I-A(s_{end}^{(n)}))\epsilonv$, where $\epsilonv$ is Gaussian noise. 
We provide the detailed method in Algorithm \ref{alg:pred_mask}.

Additionally, we want the prediction of the $n$-th event to depend only on events $1,...,n-1$. For this reason, we mask out events $n+1,...,N$ so that the importance of proceeding events on the prediction of the $n$-th event is minimized. 

\begin{algorithm}[H]
\caption{Next Event Prediction Evaluation}\label{alg:pred_mask}
\begin{algorithmic}
\REQUIRE Asynchronous noise schedule $A(s)$, pre-trained diffusion model $v_\theta(\xv_s,A(s))$, pre-trained VAE $(E_\phi(\cdot),D_\psi(\cdot))$, test event sequence $\{\zv^{(1)},...,\zv^{(N)}\}$.
\STATE Get latent event sequence $\yv=\{\yv^{(1)},...,\yv^{(N)}\}$ where $\yv^{(i)}=E_\phi(\zv^{(i)})$ for $i=1,...,N$.
\FOR{n=2,...,N}
    \STATE Sample noise $\epsilonv=\{\epsilonv^{(1)},...,\epsilonv^{(N)}\}$ where the dimension of $\epsilonv^{(i)}$ matches the dimension of $\yv^{(i)}$ for all $i$.
    \STATE Initiate \code{mask} (shape: $[1,1,N,N]$) where \code{mask[:,:,:n,:]=1} and \code{mask} is zero elsewhere.
    \STATE Define a vector field $f(\xv_s,s)$ elementwise:
    \begin{equation}\label{eq:alg_vector_field}
        f_i(\xv_s,s)=\begin{cases}
            [A'(s)]_{ii}(\yv^{(i)}-\epsilonv^{(i)}) & i<n ~\text{(Ensuring preceding events converge to $\yv^{(i)}$)}\\
            [A'(s)]_{ii}[v_\theta(\xv_s,A(s),\code{mask})]_i & i=n ~\text{(Predicting event $\yv^{(n)}$)}
        \end{cases}
    \end{equation}
    \STATE Solve the ODE $\dot\xv_s=f(\xv_s,s)$ from $s=s_{end}^{(n)}$ to $s=s_{start}^{(n)}$ using an ODE solver with the initial condition $\xv_{s_{end}^{(n)}}=A(s_{end}^{(n)})\yv+(I-A(s_{end}^{(n)}))\epsilonv$.
    \STATE Obtain sequence $\{\yv^{(1)},...,\yv^{(n-1)},\tilde{\yv}^{(n)}\}$ consisting of preceding latent events $\{\yv^{(1)},...,\yv^{(n-1)}\}$ and predicted latent event $\tilde{\yv}^{(n)}$.
    \STATE Decode $\tilde{\yv}^{(n)}$: $D_\theta(\tilde{\yv}^{(n)})=\tilde{\zv}^{(n)}=(\tilde{\tau}^{(n)},\tilde{k}^{(n)})$.
\ENDFOR\\
\STATE \textbf{Return} $RMSE(\{\tilde{\tau}^{(i)},{\tau}^{(i)}\}_{i=2}^{N})$, $ErrorRate(\{\tilde{k}^{(i)},{k}^{(i)}\}_{i=2}^{N})$
\end{algorithmic}
\end{algorithm}

\subsection{Optimal Transport Distance}
\label{appendix:OTD}
We use the optimal transport distance (OTD) to assess the long horizon evaluation ADiff4TPP. We use the \code{distance\_between\_event\_seq(.)} function posted in the GitHub repository of \cite{mei2019imputing}. We use the following hyperparameters: \code{del\_cost=1,trans\_cost=1}. To generate events in a prediction horizon $h$, we solve the ODE defined element-wise in Equation \ref{eq:prediction}. The observation window is $O=\{1,...,N-h\}$ and the prediction window is $P=\{N-h+1,...,N\}$. Our detailed method is posted in Algorithm \ref{alg:long_horizon}.

\begin{algorithm}[H]
\caption{Long Horizon Prediction Evaluation}\label{alg:long_horizon}
\begin{algorithmic}
\REQUIRE Asynchronous noise schedule $A(s)$, pre-trained diffusion model $v_\theta(\xv_s,A(s))$, pre-trained VAE $(E_\phi(\cdot),D_\psi(\cdot))$, test event sequence $\{\zv^{(1)},...,\zv^{(N)}\}$, prediction horizon $h$.
\STATE Get latent event sequence $\yv=\{\yv^{(1)},...,\yv^{(N)}\}$ where $\yv^{(i)}=E_\phi(\zv^{(i)})$ for $i=1,...,N$.
\STATE Sample noise $\epsilonv=\{\epsilonv^{(1)},...,\epsilonv^{(N)}\}$ where the dimension of $\epsilonv^{(i)}$ matches the dimension of $\yv^{(i)}$ for all $i$.
\STATE Define a vector field $f(\xv_s,s)$ elementwise:
\begin{equation}
    f_i(\xv_s,s)=\begin{cases}
        [A'(s)]_{ii}(\yv^{(i)}-\epsilonv^{(i)}) & i\leq N-h ~\text{(Ensuring preceding events converge to $\yv^{(i)}$)}\\
        [A'(s)]_{ii}[v_\theta(\xv_s,A(s))]_i & i>N-h ~\text{(Predicting events $\tilde{\yv}^{(N-h+1)},...,\tilde{\yv}^{(N)}$)}
    \end{cases}
\end{equation}
\STATE Solve the ODE $\dot \xv_s=f(\xv_s,s),\xv_1=\epsilonv$ from $s=1$ to $s=0$ using an ODE solver.
\STATE Obtain sequence $\{\yv^{(1))},...,\yv^{(N-h)},\tilde{\yv}^{(N-h+1)},...,\tilde{\yv}^{(N)}\}$ consisting of preceding latent events $\{\yv^{(1))},...,\yv^{(N-h)}\}$ and predicted latent events $\{\tilde{\yv}^{(N-h+1)},...,\tilde{\yv}^{(N)}\}$.
\STATE Decode $\tilde{\yv}^{(i)}$: $D_\theta(\tilde{\yv}^{(i)})=\tilde{\zv}^{(i)}=(\tilde{\tau}^{(i)},\tilde{k}^{(i)})$, for $i=N-h+1,...,N$.\\
\STATE \textbf{Return} $\code{distance\_between\_event\_seq}(\{\tilde{\zv}^{(i)}\}_{i=N-h+1}^{N},\{{\zv}^{(i)}\}_{i=N-h+1}^{N})$
\end{algorithmic}
\end{algorithm}

\section{Long Horizon Prediction Results}
\label{app:otd}

In this section, we compare the long horizon prediction of ADiff4TPP with existing baselines. We report the results of ADiff4TPP with 32 latent dimensions and $\beta_{\max}=0.01$.
We set the horizon length to 5, 10, 20, and 30, and report the mean and standard deviation of the OTD over all the datasets and models in five seeds. The results show that ADiff4TPP significantly outperforms all the other models. This is because ADiff4TPP initiates the generation of events in the more distant future before it completely generates events in the near future, thus sequentially providing stronger conditioning for future events.

\begin{table}[H]
\centering
\caption{OTD (5 Events / 10 Events). Standard deviations are posted below. \textbf{Bold} indicates state-of-the-art results.}
\begin{tabular}{@{}lcccccc@{}}
    \toprule
    \textbf{Model} 
    & \textbf{Amazon} 
    & \textbf{Retweet} 
    & \textbf{Taxi} 
    & \textbf{Taobao} 
    & \textbf{StackOverflow} \\ 
    \midrule
    
    \multirow{2}{*}{RMTPP}
     & 9.880/19.671 & 9.991/19.983 & 4.012/5.961 & 8.692/15.808 & 9.463/18.735\\
     & (0.016/0.035) & (0.003/0.004) & (0.136/0.196) & (0.097/0.237) & (0.075/0.160)\\
    \midrule 
    \multirow{2}{*}{NHP}
     & 9.881/19.670 & 9.992/19.954 & 4.007/5.915 & 8.498/15.202 & 9.482/18.762\\
     & (0.015/0.035) & (0.003/0.008) & (0.137/0.199) & (0.099/0.248) & (0.073/0.156)\\
    \midrule 
    \multirow{2}{*}{SAHP}
     & 9.874/19.657 & 9.998/19.977 & 5.458/7.566 & 8.615/15.546 & 9.490/18.790\\
     & (0.016/0.035) & (0.002/0.006) & (0.113/0.162) & (0.101/0.247) & (0.072/0.156)\\
    \midrule 
    \multirow{2}{*}{THP}
     & 9.869/19.642 & 9.991/19.937 & 6.023/10.816 & 8.698/15.826 & 9.498/18.808\\
     & (0.017/0.037) & (0.003/0.010) & (0.094/0.163) & (0.095/0.234) & (0.069/0.148)\\
    \midrule 
    \multirow{2}{*}{AttNHP}
     & 9.862/19.624 & 9.981/19.925 & 3.970/5.794 & 8.216/14.659 & 9.443/18.682\\
     & (0.018/0.039) & (0.005/0.012) & (0.127/0.166) & (0.105/0.247) & (0.073/0.157)\\
    \midrule 
    \multirow{2}{*}{IFTPP}
     & 9.859/19.603 & 9.989/19.927 & 4.461/5.573 & 8.030/14.212 & 9.406/18.626\\
     & (0.018/0.042) & (0.003/0.011) & (0.098/0.162) & (0.104/0.245) & (0.077/0.164)\\
    \midrule

    \multirow{2}{*}{DTPP} & 6.848/13.734 & 9.916/19.433 & 2.983/6.773 & 6.844/15.127 & 7.554/14.946\\
    & (0.010/0.024) & (0.004/0.009) & (0.013/0.019) & (0.019/0.032) & (0.020/0.036) 
    \\  \midrule
    
    \multirow{2}{*}{\correction{HYPRO}} & \correction{6.976/12.988} & \correction{9.491/19.653} & \correction{3.403/5.781} & \correction{5.786/11.367} & \correction{7.330/12.246} \\
    & \correction{(0.016/0.055)} & \correction{(0.003/0.005)} & \correction{(0.023/0.034)} & \correction{(0.028/0.047)} & \correction{(0.017/0.028)} \\  \midrule

    \multirow{2}{*}{ADiff4TPP}  
    & \textbf{6.219/12.419} & \textbf{9.119/17.738} & \textbf{2.332/3.979} & \textbf{5.398/10.255} & \textbf{6.452/11.972} \\
    & (0.049/0.115) & (0.017/0.051) & (0.022/0.015) & (0.086/0.065) & (0.400/0.922) \\
    
    \bottomrule
\end{tabular}
\label{tab:OTD_5_10}
\end{table}

\begin{table}[H]
\centering
\caption{OTD (20 Events). Standard deviations are posted below. \textbf{Bold} indicates state-of-the-art results.}
\begin{tabular}{@{}lcccccc@{}}
    \toprule
    \textbf{Model} 
    & \textbf{Amazon} 
    & \textbf{Retweet} 
    & \textbf{Taxi} 
    & \textbf{Taobao} 
    & \textbf{StackOverflow} \\ 
    \midrule
    
    \multirow{2}{*}{RMTPP}
     & 39.156 & 39.906 & 9.203 & 28.727 & 37.046\\
     & (0.075) & (0.012) & (0.326) & (0.506) & (0.320)\\
    \midrule 
    \multirow{2}{*}{NHP}
     & 39.161 & 39.745 & 9.170 & 28.056 & 37.188\\
     & (0.075) & (0.028) & (0.327) & (0.530) & (0.306)\\
    \midrule 
    \multirow{2}{*}{SAHP}
     & 39.065 & 39.874 & 12.784 & 28.262 & 37.232\\
     & (0.080) & (0.020) & (0.231) & (0.523) & (0.304)\\
    \midrule 
    \multirow{2}{*}{THP}
     & 39.069 & 39.691 & 14.308 & 28.774 & 37.116\\
     & (0.082) & (0.032) & (0.318) & (0.499) & (0.306)\\
    \midrule 
    \multirow{2}{*}{AttNHP}
     & 39.051 & 39.694 & 9.029 & 27.046 & 36.847\\
     & (0.083) & (0.032) & (0.240) & (0.499) & (0.323)\\
    \midrule 
    \multirow{2}{*}{IFTPP}
     & 38.971 & 39.705 & 8.566 & 26.243 & 36.763\\
     & (0.091) & (0.031) & (0.221) & (0.503) & (0.331)\\
    \midrule 
    \multirow{2}{*}{DTPP}  
    & 27.603 & 29.762 & 13.830 & 32.275 & 29.188 \\
    & (0.064) & (0.003) & (0.145) & (0.136) & (0.036) \\
    \midrule
    \multirow{2}{*}{\correction{HYPRO}}  
    & \correction{26.103} & \correction{30.749} & \correction{9.955} & \correction{20.805} & \correction{29.224} \\
    & \correction{(0.140)} & \correction{(0.063)} & \correction{(0.056)} & \correction{(0.074)} & \correction{(0.454)} \\
    \midrule

    \multirow{2}{*}{ADiff4TPP}  
    & \textbf{24.645} & \textbf{28.028} & \textbf{6.672} & \textbf{19.152} & \textbf{22.334} \\
    & (0.141) & (0.170) & (0.049) & (0.149) & (2.142) \\
    
    \bottomrule
\end{tabular}
\label{tab:OTD_20}
\end{table}

\begin{table}[H]
\centering
\caption{OTD (30 Events). Standard deviations are posted below. \textbf{Bold} indicates state-of-the-art results.}
\begin{tabular}{@{}lcccccc@{}}
    \toprule
    \textbf{Model} 
    & \textbf{Amazon} 
    & \textbf{Retweet} 
    & \textbf{Taxi} 
    & \textbf{Taobao} 
    & \textbf{StackOverflow} \\ 
    \midrule
    
    \multirow{2}{*}{RMTPP}
     & 58.378 & 59.792 & 12.178 & 40.602 & 54.829\\
     & (0.120) & (0.021) & (0.449) & (0.746) & (0.487)\\
    \midrule 
    \multirow{2}{*}{NHP}
     & 58.390 & 59.315 & 12.108 & 39.336 & 55.191\\
     & (0.119) & (0.056) & (0.452) & (0.766) & (0.462)\\
    \midrule 
    \multirow{2}{*}{SAHP}
     & 58.270 & 59.618 & 17.474 & 39.928 & 55.226\\
     & (0.124) & (0.038) & (0.413) & (0.770) & (0.459)\\
    \midrule 
    \multirow{2}{*}{THP}
     & 58.233 & 59.203 & 17.553 & 40.696 & 54.905\\
     & (0.130) & (0.063) & (0.388) & (0.735) & (0.469)\\
    \midrule 
    \multirow{2}{*}{AttNHP}
     & 58.193 & 59.171 & 11.719 & 38.404 & 54.420\\
     & (0.130) & (0.065) & (0.307) & (0.723) & (0.492)\\
    \midrule 
    \multirow{2}{*}{IFTPP}
     & 58.119 & 59.200 & 11.592 & 37.587 & 54.311\\
     & (0.140) & (0.064) & (0.337) & (0.703) & (0.500)\\
    \midrule 
    \multirow{2}{*}{DTPP}
     & 41.745 & 39.602 & 20.669 & 50.039 & 43.399\\
     & (0.161) & (0.063) & (0.482) & (0.684) & (0.486)\\
    \midrule 
    \multirow{2}{*}{\correction{HYPRO}}  
    & \correction{34.134} & \correction{38.952} & \correction{13.914} & \correction{30.686} & \correction{36.769} \\
    & \correction{(0.155)} & \correction{(0.037)} & \correction{(0.072)} & \correction{(0.198)} & \correction{(0.563)} \\
    \midrule
    
    \multirow{2}{*}{ADiff4TPP}  
    & \textbf{32.866} & \textbf{31.653} & \textbf{8.903} & \textbf{29.181} & \textbf{30.660} \\
    & (0.477)& (0.208)& (0.012)& (1.131)& (1.789)\\   
    \bottomrule
\end{tabular}
\label{tab:OTD_30}
\end{table}

\newpage

\section{Ablation Studies on Diffusion Processes}
\label{app:ablation}

\subsection{Different Noise Schedules}

It is naturally important to assess whether diffusion models with asynchronous noise schedules outperform diffusion models without asynchronous noise schedules, and furthermore, whether there are noise schedules that outperform our choice of $A(s)$ in Equation \ref{eq:a_s}. To evaluate this, we compare ADiff4TPP with diffusion models trained on two different noise schedules:
\begin{enumerate}
    \item A \textbf{disjoint} noise schedule, where one event is diffused at a time. This is identical to autoregressive modeling of event sequences. The noise schedule is given as:
    \begin{equation*}
        [A(s)^{disjoint}]_{ii}=\text{clip}\left(\frac{s_{end}^{(i,disjoint)}-s}{s_{end}^{(i,disjoint)}-s_{start}^{(i,disjoint)}},\min=0,\max=1\right),
    \end{equation*}
    where 
    \begin{equation*}
        s_{start}^{(i,disjoint)}=\frac{N-i}{N},~s_{end}^{(i,disjoint)}=\frac{N-i+1}{N}.
    \end{equation*}
    \item A \textbf{synchronous} noise schedule, where all events are diffused at the same time. This is identical to rectified flow. The noise schedule is given as:
    \begin{equation*}
        A(s)^{sync}=(1-s)I.
    \end{equation*}
\end{enumerate}

\begin{figure}[H]
    \centering
    \begin{subfigure}[b]{0.49\textwidth}
        \centering
        \includegraphics[width=\textwidth]{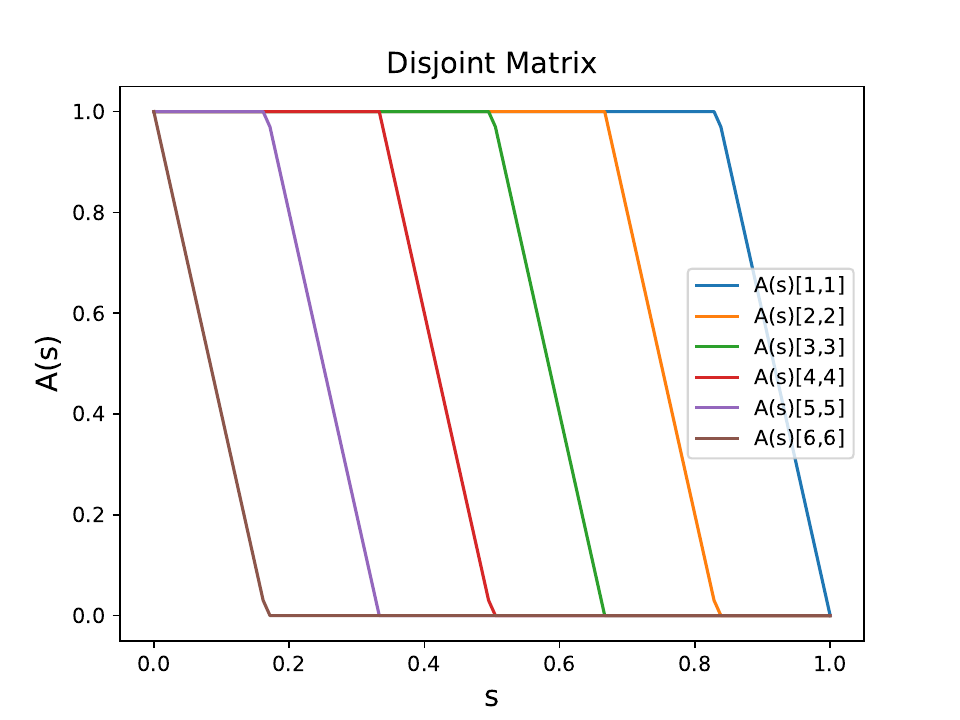}
        \caption{An example of \textbf{disjoint} noise schedule with 6 events. The noise schedule shows that the event sequence diffuses one event at a time. Event $i$ starts diffusing after event $i+1$ is completely diffused.}
        \label{fig:fig1}
    \end{subfigure}
    \hfill
    \begin{subfigure}[b]{0.49\textwidth}
        \centering
        \includegraphics[width=\textwidth]{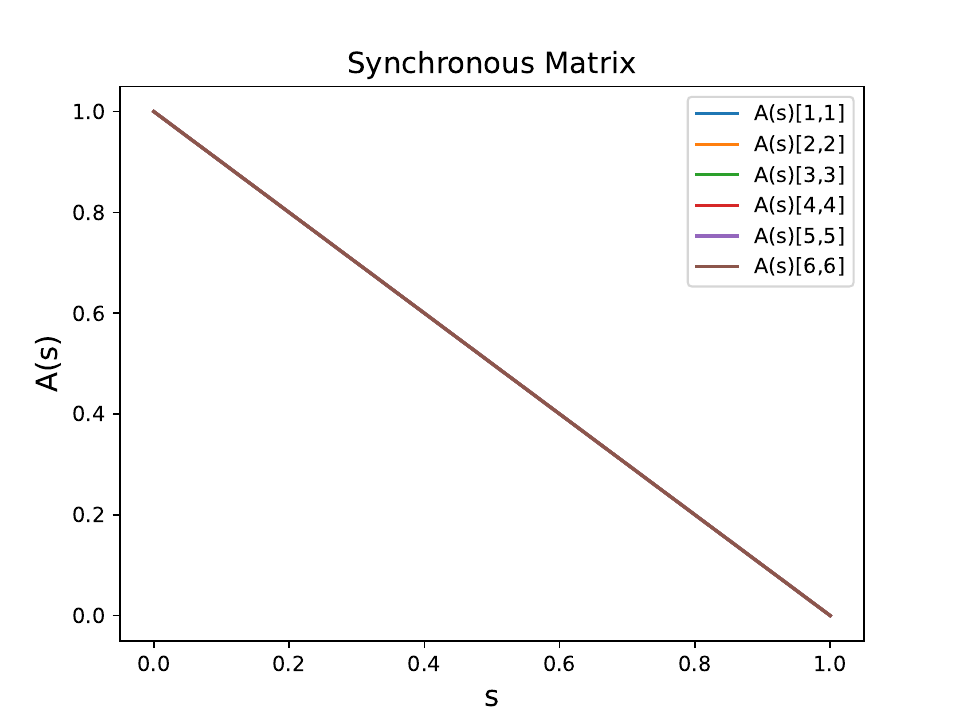}
        \caption{An example of \textbf{synchronous} noise schedule with 6 events. The noise schedule shows that every event starts diffusing at $s=0$ and ends at $s=1$.}
        \label{fig:fig2}
    \end{subfigure}
    \caption{Different noise schedules}
    \label{fig:two_figures}
\end{figure}

Examples of $A(s)^{disjoint}$ and $A(s)^{sync}$ with 6 events are plotted in Figure \ref{fig:two_figures}. We train diffusion models with 32 latent dimensions and $\beta_{\max}=0.01,0.001$ on the five TPP datasets for both of these noise schedules and compare them with ADiff4TPP in next event prediction tasks.

\subsection{Absence of Masking}

We assessed the importance of masking by comparing the results of ADiff4TPP in next event prediction tasks with and without masking.

The removal of masking is trivally done by modifying Algorithm \ref{alg:pred_mask} so that the vector field $f(\xv(s),s)$ from equation \ref{eq:alg_vector_field} is instead defined elementwise as:
\begin{equation}
    f_i(\xv(s),s)=\begin{cases}
        A'(s)(\epsilon_i-\xv_i) & i<n\\
        A'(s)v_\theta(\xv(s),A(s)) & i\geq n,
    \end{cases}
\end{equation}
thus removing the masking in the computation of $v_\theta(\xv(s),A(s))$. After solving the ODE, we then obtain a sequence $\{\xv_1,...,\xv_{n-1},\tilde{\xv}_{n},...,\tilde{\xv}_{N}\}$ consisting of preceding latent events $\{\xv_1,...,\xv_{n-1}\}$ and predicted latent events $\{\tilde{\xv}_{n},...,\tilde{\xv}_{N}\}$ along the entire prediction horizon. We only decode $\tilde{\xv}_{n}$.

The main difference is that the prediction of $\tilde{\xv}_{n}$ is dependent only on past events if we use masking. If we remove masking, then the prediction of $\tilde{\xv}_{n}$ depends on observations of past and future events.

\subsection{Results}

\begin{table}[H]
\centering
\caption{Ablation Study Results on Next Event prediction (Complete). We compared ADiff4TPP with different choices of noise schedules as well as different latent dimensions and $\beta_{max}$ for training the VAE. \textbf{Bold} indicates state-of-the-art results in either RMSE or Error Rate.
}
\begin{tabular}{@{}lcccccc@{}}
    \toprule
    \textbf{Model} 
    & \textbf{Amazon} 
    & \textbf{Retweet} 
    & \textbf{Taxi} 
    & \textbf{Taobao} 
    & \textbf{StackOverflow} \\     

    \midrule
    
    Disjoint Diffusion 
    & 0.492/69.9\% & 18.813/47.3\% & 0.313/23.3\% 
    & 0.506/48.2\% & 1.123/65.7\%\\
    $d_{\text{latent}}=32,\beta_{\max}=0.01$
    & (0.009/0.006) & (0.096/0.008) & 0.004/0.080 
    & (0.069/0.027)  & (0.042/0.002)  \\ 
    \midrule
    
    Disjoint Diffusion  
    & 0.483/69.5\% & 20.054/49.4\% & 0.388/24.8\%
    & 0.461/49.3\% & 1.544/69.4\% \\
    $d_{\text{latent}}=32,\beta_{\max}=0.001$
    & (0.007/0.001) & (0.099/0.007) & 0.006/0.018 
    & (0.054/0.008)  & (0.038/0.008) \\ 
    \midrule
    
    Synchronized Diffusion  
    & 0.432/68.5\% & 18.358/45.8\% & 0.311/28.4\% 
    & 0.436/45.7\%  & 1.285/61.7\%  \\
    $d_{\text{latent}}=32,\beta_{\max}=0.01$
    & (0.002/0.003) & (0.103/0.003)  & 0.003/0.050 
    & (0.139/0.022)  & (0.041/0.004)  \\ 
    \midrule
    
    Synchronized Diffusion  
    & 0.424/68.0 & 18.085/45.4\% & 0.373/22.4\%  
    & 0.422/46.9\% & 1.336/62.3\%  \\
    $d_{\text{latent}}=32,\beta_{\max}=0.001$
    & (0.001/0.002) & (0.062/0.004)  & 0.026/0.023 
    & (0.063/0.010)  & (0.011/0.005)  \\
    \midrule
    
    Unmasked Diffusion  
    & 0.408/68.0\% & 19.513/45.6\% & 0.301/9.50\% 
    & 0.221/54.3\% & 1.249/60.6\%  \\
    $d_{\text{latent}}=32,\beta_{\max}=0.01$
    & (0.001/0.002) & (0.964/0.024)  & 0.002/0.002 
    & (0.028/0.030) & (0.019/0.014) \\ 
    \midrule
    
    Unmasked Diffusion  
    & 0.495/71.8\% & 19.803/46.0\% & 0.327/9.36\% 
    & 0.174/53.1\% & 1.094/59.9\%  \\
    $d_{\text{latent}}=32,\beta_{\max}=0.001$
    & (0.013/0.057) & (0.269/0.036) & 0.001/0.001 
    & (0.025/0.023) & (0.015/0.019) \\ 
    \midrule
    
    \midrule
    
    ADiff4TPP  
    & 0.440/68.7\% & 19.383/53.2\% & 0.342/10.5\% 
    & 0.354/54.9\%  & \textbf{1.090/57.7\%}  \\
    $d_{\text{latent}}=16,\beta_{\max}=0.01$
    & (0.004/0.002)  & (0.041/0.003)  & 0.009/0.002  
    & (0.155/0.015)   & (0.012/0.006)   \\ 
    \midrule
    
    ADiff4TPP  
    & 0.421/68.5\% & 17.857/41.6\% & 0.304/\textbf{8.18\%} 
    & 0.326/46.2\% & 1.310/59.8\%  \\
    $d_{\text{latent}}=16,\beta_{\max}=0.001$
    & (0.001/0.002)  & (0.030/0.0003)  & 0.008/0.003  
    & (0.017/0.03)   & (0.058/0.012)   \\ 
    \midrule
    
    ADiff4TPP  
    & \textbf{0.407}/67.5\% & 17.880/\textbf{39.3\%} & \textbf{0.299}/8.46\% 
    & \textbf{0.140}/\textbf{42.6\%} & 1.226/61.3\%  \\
    $d_{\text{latent}}=32,\beta_{\max}=0.01$
    & (0.002/0.002)  & (0.051/0.001)  & 0.0002/0.0005 
    & (0.054/0.011) & (0.035/0.011)  \\ 
    \midrule
    
    ADiff4TPP  
    & 0.436/\textbf{67.4\%} & \textbf{17.271}/39.4\% & 0.327/8.60\% 
    & 0.177/44.1\% & 1.169/63.5\%  \\
    $d_{\text{latent}}=32,\beta_{\max}=0.001$
    & (0.003/0.0003)  & (0.010/0.002)  & 0.0002/0.0005  
    & (0.022/0.010)   & (0.078/0.015)   \\ 
    \bottomrule
\end{tabular}
\label{tab:ablation}
\end{table}

We posted our ablation results in Table \ref{tab:ablation}. 
ADiff4TPP with 32 latent dimensions outperforms all the other methods in next event predictions in the Amazon, Retweet, and Taobao datasets. 
ADiff4TPP with 16 latent dimensions and $\beta_{\max}=0.01$ achieves the best results in the Stackoverflow dataset.

While Unmasked Diffusion shows comparable results to ADiff4TPP on a few datasets, it fails to match the consistency of ADiff4TPP across all metrics and datasets. In contrast, Synchronized Diffusion and Disjoint Diffusion perform significantly worse, with higher RMSEs and error rates.
These results showcase the reliability of masking and asynchronous noise schedules in ADiff4TPP for prediction tasks in temporal point processes.

\end{document}